\newtheorem{theorem}{Theorem}
\newtheorem{lemma}{Lemma}
\newtheorem{remark}{Remark}
\newtheorem{definition}{Definition}
\newcommand{\abs}[1]{\left\vert#1\right\vert}
\newcommand{\norm}[1]{\left\Vert#1\right\Vert}
\newcommand{\p}[1]{\left(#1\right)}
\newcommand{\R}{\mathbb{R}}
\newcommand{\N}{\mathbb{N}}
\newcommand{\Pb}{\mathbb{P}}
\DeclareMathOperator{\id}{id}
\DeclareMathOperator{\E}{\mathbb{E}}
\title{
  \vspace{-1cm}
  \Large{An Elementary Proof that Q-learning Converges Almost Surely}
}
\author{
  \normalsize Matthew T. Regehr
  \thanks{An early version of this work was submitted as author's CMPUT 653 course project in Winter 2021.} \\
  \normalsize University of Alberta \\
  \normalsize \texttt{mregehr@ualberta.ca}
  \and
  \normalsize Alex Ayoub \\
  \normalsize University of Alberta \\
  \normalsize \texttt{aayoub@ualberta.ca}
}
\date{
  \normalsize\today
  \vspace{-0.5cm}
}
\begin{document}
  \maketitle

  \section{Introduction}

  Watkins' and Dayan's Q-learning is a model-free reinforcement learning algorithm that iteratively refines an estimate for the optimal action-value function of an MDP by stochastically ``visiting'' many state-ation pairs \citep{watkins1992q}. Variants of the algorithm lie at the heart of numerous recent state-of-the-art achievements in reinforcement learning, including the superhuman Atari-playing deep Q-network \citep{mnih2015human}.

  The goal of this paper is to reproduce a precise and (nearly) self-contained proof that Q-learning converges. Much of the available literature leverages powerful theory to obtain highly generalizable results in this vein. However, this approach requires the reader to be familiar with and make many deep connections to different research areas. A student seeking to deepen their understand of Q-learning risks becoming caught in a vicious cycle of ``RL-learning Hell''. For this reason, we give a complete proof from start to finish using only one external result from the field of stochastic approximation, despite the fact that this minimal dependence on other results comes at the expense of some ``shininess''.

  \section{Related Works}

  The first proof that Q-learning converges with probability $1$ is outlined in \citep{watkins1989learning} and given more fully in \citep{watkins1992q}. The proof of \citep{tsitsiklis1994asynchronous} applies the theory of stochastic approximation to allow a far more general asynchronous structure. \citep{even2003learning} builds upon this work to derive more precise rates of convergence. Another approach by \citep{borkar2000ode} leverages the Lyapunov theory of ordinary differential equations to analyze a swath of stochastic approximation algorithms. Lastly, \citep{szepesvari1996generalized} analyzes Q-learning in the setting of generalized MDPs and focuses on the contractivity properties of dynamic programming operators.

  \section{Background}

  We make frequent use of standard measure theoretic and linear analytic notation and thus invite the reader to read Section \ref{sec:notation} upon encountering any unfamiliar symbols or terms.

  \subsection{Markov Decision Processes}

  \label{subsec:mdps}

  A typical formalization of environment in reinforcement learning---and the one we study here---is the Markov decision process (MDP). A reader familiar with the fundamentals of reinforcement learning may skip this subsection without issue.

  \begin{definition}
    A countable (finite) discounted MDP is a tuple $\langle\mathcal{S}, \mathcal{A}, P, r, \gamma\rangle$ where $\mathcal{S}$ and $\mathcal{A}$ are countable (finite) sets of ``states'' and ``actions'' respectively, $P : \mathcal{S} \times \mathcal{A} \to \Delta(\mathcal{S})$ is a ``transition kernel'', $r \in \ell^\infty(\mathcal{S} \times \mathcal{A})$ represents ``rewards'', and $\gamma \in [0, 1)$ is a ``discount rate''.
  \end{definition}

  In order to design agents that make ``good'' decisions when interacting with an MDP, we would like to somehow measure the value of making certain decisions in certain states. A convenient approach to measuring value relies on the fixed point theory of so-called ``dynamic programming'' operators. The following class of operators will serve our purposes nicely.

  \begin{definition}
    The ``Bellman optimality operator'' of an MDP $M = \langle\mathcal{S}, \mathcal{A}, P, r, \gamma\rangle$ is
    \begin{align*}
      T^*_M : \ell^\infty(\mathcal{S} \times \mathcal{A}) \to \ell^\infty(\mathcal{S} \times \mathcal{A}),
        q \mapsto (s, a) \mapsto r(s, a) + \gamma \sum_{s' \in \mathcal{S}} P(s' | s, a)\sup_{a' \in \mathcal{A}} q(s', a').
    \end{align*}
  \end{definition}

  Incidentally, exact or even approximate knowledge of the fixed point\footnote{A fixed point of a map $f : \mathcal{X} \to \mathcal{X}$ is a point $x^* \in \mathcal{X}$ such that $f(x^*) = x^*$.} of the Bellman optimality operator is sufficient to act optimally or near-optimally\footnote{See Lemma I at \url{https://rltheory.github.io/lecture-notes/planning-in-mdps/lec6/}.}. For now, however, it is enough that a unique fixed point exists. The proof is a routine application of the well-known Banach fixed point theorem and can be found in Section \ref{subsec:mdps_proofs}.

  \begin{theorem} \label{thm:existence_of_q*}
    For any MDP $M$, $T^*_M$ admits a unique fixed point $q^*_M$, which we refer to as the ``optimal action-value function'' for $M$.
  \end{theorem}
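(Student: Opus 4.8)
The plan is to derive Theorem~\ref{thm:existence_of_q*} from the Banach fixed point theorem, so the work reduces to verifying its two hypotheses: that $\ell^\infty(\mathcal{S}\times\mathcal{A})$ under the supremum norm $\norm{\cdot}_\infty$ is a complete metric space, and that $T^*_M$ is a $\gamma$-contraction on it. Completeness of $(\ell^\infty(\mathcal{S}\times\mathcal{A}), \norm{\cdot}_\infty)$ is standard --- it is a Banach space --- so I would simply recall it, deferring details to Section~\ref{sec:notation}.

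First I would check that $T^*_M$ is well-defined, i.e., that it sends bounded functions to bounded functions. Fix $q \in \ell^\infty(\mathcal{S}\times\mathcal{A})$ and $(s,a) \in \mathcal{S}\times\mathcal{A}$. Since $P(\cdot\,|\,s,a) \in \Delta(\mathcal{S})$ sums to one,
\[
  \abs{(T^*_M q)(s,a)} \le \abs{r(s,a)} + \gamma\sum_{s'\in\mathcal{S}} P(s'\,|\,s,a)\,\sup_{a'\in\mathcal{A}}\abs{q(s',a')} \le \norm{r}_\infty + \gamma\norm{q}_\infty,
\]
which is finite and uniform in $(s,a)$, so $T^*_M q \in \ell^\infty(\mathcal{S}\times\mathcal{A})$.

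The heart of the argument is the contraction estimate. For $q, q' \in \ell^\infty(\mathcal{S}\times\mathcal{A})$ and any $(s,a)$,
\[
  \abs{(T^*_M q)(s,a) - (T^*_M q')(s,a)} = \gamma\,\abs{\sum_{s'\in\mathcal{S}} P(s'\,|\,s,a)\p{\sup_{a'}q(s',a') - \sup_{a'}q'(s',a')}}.
\]
I would then establish the sublemma that the supremum is non-expansive: for each fixed $s'$,
\[
  \abs{\sup_{a'} q(s',a') - \sup_{a'} q'(s',a')} \le \sup_{a'}\abs{q(s',a') - q'(s',a')} \le \norm{q - q'}_\infty,
\]
where finiteness of the suprema is guaranteed by $q, q' \in \ell^\infty$. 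The standard way to see the first inequality --- and the one genuine subtlety here, since $\mathcal{A}$ may be countably infinite and the suprema need not be attained --- is an $\varepsilon$-argument: for $\varepsilon > 0$ choose $a'_\varepsilon$ with $q(s',a'_\varepsilon) \ge \sup_{a'}q(s',a') - \varepsilon$, note $\sup_{a'}q'(s',a') \ge q'(s',a'_\varepsilon) \ge q(s',a'_\varepsilon) - \sup_{a'}\abs{q(s',a') - q'(s',a')}$, combine these, swap the roles of $q$ and $q'$, and let $\varepsilon \to 0$. Substituting back and using $\sum_{s'} P(s'\,|\,s,a) = 1$ yields $\abs{(T^*_M q)(s,a) - (T^*_M q')(s,a)} \le \gamma\norm{q - q'}_\infty$ for every $(s,a)$, hence $\norm{T^*_M q - T^*_M q'}_\infty \le \gamma\norm{q - q'}_\infty$.

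Since $\gamma \in [0,1)$, $T^*_M$ is a contraction on a complete metric space, so Banach's fixed point theorem produces a unique fixed point, which is the claimed $q^*_M$. Everything except the non-expansiveness of the supremum is routine bookkeeping; that sublemma, and in particular handling the possibly non-attained suprema when $\mathcal{A}$ is infinite, is the only step I would expect to require care.
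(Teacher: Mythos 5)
Your proposal is correct and follows essentially the same route as the paper: both apply Banach's fixed point theorem to $T^*_M$ on the complete metric space $(\ell^\infty(\mathcal{S}\times\mathcal{A}), \norm{\cdot}_\infty)$ and establish the $\gamma$-contraction via the non-expansiveness of the supremum over actions. The only differences are cosmetic --- you defer completeness as standard where the paper spells it out, and you are slightly more careful about non-attained suprema when $\mathcal{A}$ is infinite.
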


  The following bound will serve a useful purpose in proving our main theorem. As before, a proof can be found in Section \ref{subsec:mdps_proofs}.

  \begin{lemma}
    \label{lemma:q*_bound}

    For any MDP $M$ with rewards $r$ and discount rate $\gamma$,
    \begin{align*}
      \norm{q^*_M}_\infty \leq \frac{\norm{r}_\infty}{1 - \gamma}.
    \end{align*}
  \end{lemma}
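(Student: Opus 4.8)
The plan is to exploit the fact, established in Theorem~\ref{thm:existence_of_q*}, that $q^*_M$ is a \emph{fixed point} of $T^*_M$, so that it satisfies the self-referential identity $q^*_M = T^*_M q^*_M$ pointwise. Writing this out at an arbitrary pair $(s,a) \in \mathcal{S} \times \mathcal{A}$ gives
\begin{align*}
  q^*_M(s,a) = r(s,a) + \gamma \sum_{s' \in \mathcal{S}} P(s' \mid s, a) \sup_{a' \in \mathcal{A}} q^*_M(s', a').
\end{align*}

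First I would bound the inner supremum crudely by $\sup_{a' \in \mathcal{A}} q^*_M(s',a') \leq \norm{q^*_M}_\infty$ for every $s'$, and use that $P(\cdot \mid s,a)$ is a probability measure, so $\sum_{s' \in \mathcal{S}} P(s' \mid s,a) = 1$. Combining this with $\abs{r(s,a)} \leq \norm{r}_\infty$ and the triangle inequality yields $\abs{q^*_M(s,a)} \leq \norm{r}_\infty + \gamma \norm{q^*_M}_\infty$. Since the right-hand side does not depend on $(s,a)$, taking the supremum over all pairs $(s,a)$ gives $\norm{q^*_M}_\infty \leq \norm{r}_\infty + \gamma\norm{q^*_M}_\infty$.

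It then remains only to rearrange this into $(1-\gamma)\norm{q^*_M}_\infty \leq \norm{r}_\infty$ and divide by $1-\gamma > 0$. The single point that needs care --- and the only place the argument could go wrong --- is that this rearrangement is valid precisely because $\norm{q^*_M}_\infty < \infty$, which is guaranteed because Theorem~\ref{thm:existence_of_q*} places the fixed point in $\ell^\infty(\mathcal{S}\times\mathcal{A})$; no separate boundedness argument is required. An alternative route that sidesteps even this subtlety is to write $q^*_M = \lim_{n\to\infty} (T^*_M)^n 0$ from the Banach fixed point theorem, observe that $\norm{(T^*_M)^{n+1}0 - (T^*_M)^n 0}_\infty \leq \gamma^n \norm{T^*_M 0}_\infty = \gamma^n \norm{r}_\infty$ by the contraction property, and sum the resulting geometric series; I expect the fixed-point-identity approach to be shorter, so I would present that one. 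There is no real obstacle here: the lemma is essentially a one-line consequence of the fixed point property together with the triangle inequality.
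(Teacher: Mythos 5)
Your argument is exactly the paper's: apply the fixed point identity pointwise, use the triangle inequality and $\sum_{s'} P(s'\mid s,a) = 1$ to get $\abs{q^*_M(s,a)} \leq \norm{r}_\infty + \gamma\norm{q^*_M}_\infty$, take the supremum, and rearrange. Your explicit remark that the rearrangement requires $\norm{q^*_M}_\infty < \infty$ (guaranteed since $q^*_M \in \ell^\infty(\mathcal{S}\times\mathcal{A})$) is a point the paper leaves implicit, and is a welcome addition.
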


  \subsection{Sampling Trajectories from an MDP}

  In order to compute Q-learning iterates, we would like to sample trajectories from a distribution that respects the dynamics of a given countable discounted MDP $M = \langle \mathcal{S}, \mathcal{A}, P, r, \gamma \rangle$. To that end, we require some statistical apparatus. Once again, the reader is referred to Section \ref{sec:notation} if any notation is unfamiliar.

  \begin{definition}
    The ``trajectory space'' of $M$ is the measurable space
    \begin{align*}
      (\Omega_M, \mathcal{F}_M) := \p{
        (\mathcal{S} \times \mathcal{A} \times \mathcal{S})^{\N_0},
        \bigotimes_{t \in \N_0} \mathcal{P}(\mathcal{S} \times \mathcal{A} \times \mathcal{S})
      }.
    \end{align*}
  \end{definition}
  \begin{definition}
    The ``trajectory process'' of $M$ is the sequence $(S_0, A_0, S'_0, S_1, A_1, S'_1, \dots)$ of $\mathcal{F}_M/\mathcal{P}(\mathcal{S})$ and $\mathcal{F}_M/\mathcal{P}(\mathcal{A})$-measurable projections defined by\footnote{For convenience, we suppress $M$ from the notation of the trajectory process as the correct meaning should always be deducible via ``type inference''.}
    \begin{align*}
      ((S_0, A_0, S'_0), (S_1, A_1, S'_1), \dots) := \id_{\Omega_M}.
    \end{align*}
  \end{definition}
  \begin{definition} \label{def:trajectory_measure}
    The set of ``trajectory measures'' on $M$, denoted $\Delta_T(M)$, is the set of probability measures $\Pb \in \Delta(\Omega_M, \mathcal{F}_M)$ satisfying
    \begin{align*}
      \Pb(S'_t = s'_t | S_0, A_0, S'_0, \dots, S_t, A_t) = P(s' | S_t, A_t)
    \end{align*}
    almost surely (a.s.) for any $s'_t \in \mathcal{S}$ and $t \in \N_0$.
  \end{definition}
  \begin{definition}
    The ``occurences'' of $(s, a) \in \mathcal{S} \times \mathcal{A}$ along a ``trajectory'' $\omega \in \Omega_M$ constitute
    \begin{align*}
      \mathcal{T}_{(s, a)}(\omega) := \{t \in \N_0 : (S_t, A_t)(\omega) = (s, a)\}.
    \end{align*}
  \end{definition}

  \section{The Q-learning Algorithm}

  Our overall goal is to design a reinforcement learning agent that makes good decisions in a given environment. To that end, we seek to develop an algorithm that closely approximates the optimal action-value function for a given MDP. Furthermore, we would like to do this without explicitly accessing an environment's transition kernel as these are frequently unavailable in real-world applications. On the other hand, many real-world environments permit the sampling of transitions and in fact we will use sampling to develop the Q-learning algorithm. In particular, by stochastically ``visiting'' many state-action pairs, we iteratively refine an estimate for $q^*_M$. The details of how we visit states and choose actions should not matter as long as our samples cover the state-action space sufficiently well. Altogether, these ideas form the basis of Watkins' and Dayan's Q-learning \citep{watkins1992q}.

  \begin{definition}[Q-learning]
    The ``Q-learning iterates'' on a finite MDP $M$ with
    discount rate $\gamma$ induced by a ``stepsize'' sequence $\alpha = (\alpha_t)_{t \in \N_0}$ in $\R$ and a trajectory $\omega \in \Omega_M$ form the sequence\footnote{Similarly, we omit $M$ from the notation of the Q-learning iterates and rely instead upon context and prepositional phrases to make the underlying MDP unambiguous.} $(Q_t^\alpha(\omega))_{t \in \N_0}$ in $\ell^\infty(\mathcal{S} \times \mathcal{A})$ defined recursively by $Q_0^\alpha(\omega) \equiv \mathbf{0}$ and\footnote{We adopt the function ``currying'' convention $f(y; x) := f(x)(y)$ for $f : \mathcal{X} \to \mathcal{Y} \to \mathcal{Z}$, $x \in \mathcal{X}$, and $y \in \mathcal{Y}$.}
    \begin{align*}
      Q_{t + 1}^\alpha := (s, a; \omega) \mapsto
        \begin{cases}
          (1 - \alpha_t)Q_t^\alpha(s, a; \omega) + \alpha_t(r(s, a) + \gamma \max\limits_{a' \in \mathcal{A}}Q_t^\alpha(S'_t(\omega), a'; \omega))
            & \text{if } t \in \mathcal{T}_{(s, a)}(\omega) \\
          Q_t^\alpha(s, a; \omega)
            & \text{otherwise}
        \end{cases}
    \end{align*}
    for $t \in \N_0$.
  \end{definition}

  \begin{remark}
    While the construction of the Q-learning iterates depends explicitly on the states, actions, rewards, and discount rate of an MDP, it does not depend directly on the transition kernel of an MDP. This increases the flexibility of Q-learning and, as we will see later, does not preclude convergence as long as the trajectories are sampled from an appropriate distribution.
  \end{remark}

  \section{Convergence of Q-learning}

  Q-learning iterates in hand, we are ready to state the assumptions that lead to convergence.

  \begin{definition}
    \label{def:hypotheses}

    Let $M$ be an MDP. A trajectory measure $\Pb \in \Delta_T(M)$ (see Definition \ref{def:trajectory_measure}) and a sequence $(\alpha_t)_{t \in \N_0}$ in $[0, 1]$ are said to satisfy the Robbins--Monro condition when
    \begin{align*}
      \sum_{t \in \mathcal{T}_{(s, a)}(\omega)} \alpha_t = \infty
      \quad\text{and}\quad
      \sum_{t \in \mathcal{T}_{(s, a)}(\omega)} \alpha_t^2 < \infty.
    \end{align*}
    for all $(s, a) \in \mathcal{S} \times \mathcal{A}$ and $\Pb$-almost all $\omega \in \Omega_M$. The set of all such trajectory measure-stepsize sequence pairs is denoted $\nu(M)$.
  \end{definition}

  \begin{remark}
    \label{rmk:infinite_occurences}

    The condition that $\sum_{t \in \mathcal{T}_{(s, a)}(\omega)} \alpha_t = \infty$ requires that $\mathcal{T}_{(s, a)}(\omega)$ be infinite for all $(s, a) \in \mathcal{S} \times \mathcal{A}$ and $\Pb$-almost all $\omega \in \Omega_M$, i.e. the sampling strategy that produces the measure $\Pb$ must visit all state-action pairs infinitely often.
  \end{remark}

  At last, we have arrived at our main result. The proof is delayed until Subsection \ref{subsec:big_boi_proof} as only then will we be adequately equipped for the task.

  \begin{theorem} \label{thm:big_boi}
    Let $M$ be a finite MDP and let $(\Pb, \alpha) \in \nu(M)$ be a Robbins--Monro trajectory measure-stepsize sequence pair for $M$. Then the Q-learning iterates $(Q_t^\alpha(\omega))_{t \in \N_0}$ on $M$ converge uniformly to $q^*_M$ for $\Pb$-almost all $\omega \in \Omega_M$.
  \end{theorem}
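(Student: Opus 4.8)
The plan is to reduce the almost-sure convergence of the Q-learning iterates to a single "black box" result from stochastic approximation applied coordinate-by-coordinate, using the contractivity of $T^*_M$ in the $\ell^\infty$ norm as the geometric engine. First I would record the basic structural facts: since $\mathcal{S}$ and $\mathcal{A}$ are finite, $\ell^\infty(\mathcal{S}\times\mathcal{A})$ is finite-dimensional, uniform convergence is just coordinatewise convergence, and $T^*_M$ is a $\gamma$-contraction with fixed point $q^*_M$ (Theorem \ref{thm:existence_of_q*}). I would then rewrite the update in ``error coordinates'' $\Delta_t := Q_t^\alpha - q^*_M$. For each fixed $(s,a)$ and each $t \in \mathcal{T}_{(s,a)}(\omega)$ the update reads
\begin{align*}
  \Delta_{t+1}(s,a) = (1-\alpha_t)\Delta_t(s,a) + \alpha_t\big( (T^*_M Q_t^\alpha)(s,a) - q^*_M(s,a) + w_t(s,a)\big),
\end{align*}
where $w_t(s,a) := \gamma\max_{a'}Q_t^\alpha(S'_t,a') - \gamma\sum_{s'}P(s'|s,a)\max_{a'}Q_t^\alpha(s',a')$ is the sampling noise, and $\Delta_{t+1}(s,a) = \Delta_t(s,a)$ otherwise; using $T^*_M q^*_M = q^*_M$ and the contraction property, $\|(T^*_M Q_t^\alpha) - q^*_M\|_\infty = \|T^*_M Q_t^\alpha - T^*_M q^*_M\|_\infty \le \gamma\|\Delta_t\|_\infty$.

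The key steps, in order, are: (1) establish that the iterates are almost surely bounded — this is needed because the noise term $w_t$ has conditional variance controlled by $\|Q_t^\alpha\|_\infty^2$, which must be shown finite; I would argue this by a deterministic induction bounding $\|Q_t^\alpha\|_\infty$ in terms of $\|r\|_\infty/(1-\gamma)$ together with a ``sandwich'' style argument, invoking Lemma \ref{lemma:q*_bound}. (2) Re-index time along $\mathcal{T}_{(s,a)}(\omega)$, which by Remark \ref{rmk:infinite_occurences} is almost surely infinite, so that for each fixed $(s,a)$ the coordinate process becomes a genuine (not merely ``asynchronous'') stochastic approximation recursion in $n$ = the $n$-th visit, with stepsizes $\alpha_{t_n}$ satisfying $\sum_n \alpha_{t_n} = \infty$, $\sum_n \alpha_{t_n}^2 < \infty$ by the Robbins--Monro hypothesis. (3) Verify the noise $w_t$ is a martingale difference with respect to the natural filtration generated by the trajectory process: $\E[w_t(s,a)\mid \mathcal{F}_t] = 0$ by Definition \ref{def:trajectory_measure}, and $\E[w_t(s,a)^2\mid\mathcal{F}_t] \le C(1+\|\Delta_t\|_\infty^2)$. (4) Feed the recursion into the external stochastic approximation theorem to conclude that each coordinate of $\Delta_t$ converges to $0$ almost surely, hence (finitely many coordinates) $\|\Delta_t\|_\infty \to 0$ almost surely.

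The main obstacle I anticipate is the coupling between coordinates through the $\max_{a'}Q_t^\alpha(S'_t,a')$ term: the ``target'' for coordinate $(s,a)$ depends on \emph{all} coordinates via the contraction $T^*_M$, so one cannot analyze each coordinate in complete isolation — the driving term $\gamma\|\Delta_t\|_\infty$ involves the sup over coordinates whose convergence is not yet established. The standard resolution, which I would adopt, is an inductive or simultaneous argument: one shows $\limsup_t \|\Delta_t\|_\infty \le \gamma\limsup_t\|\Delta_t\|_\infty$ almost surely (so the $\limsup$ is $0$), by running the stochastic approximation comparison lemma against the ``worst case'' bound $\|\Delta_t\|_\infty$ treated as a slowly varying envelope, iterating the contraction factor $\gamma < 1$ to squeeze the asymptotic error to zero. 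Making this envelope argument precise — in particular handling the asynchronous visitation so that the envelope is updated consistently across coordinates, and ensuring the chosen external SA result is strong enough to accommodate a state-dependent (but asymptotically contracting) drift — is the technical heart of the proof, and is presumably where the bulk of Subsection \ref{subsec:big_boi_proof} is spent.
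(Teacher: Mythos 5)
Your proposal takes a genuinely different route from the paper. You sketch the classical asynchronous stochastic-approximation argument (in the style of Tsitsiklis and of Jaakkola--Jordan--Singh): write the error recursion $\Delta_{t+1}(s,a) = (1-\alpha_t)\Delta_t(s,a) + \alpha_t\p{(T^*_M Q_t^\alpha)(s,a) - q^*_M(s,a) + w_t(s,a)}$, verify that $w_t$ is bounded martingale-difference noise, and squeeze $\limsup_t\norm{\Delta_t}_\infty$ to zero via the contraction factor $\gamma$. The paper instead follows Watkins and Dayan: it constructs the action-replay process $\hat{M}^\alpha(\omega)$, proves $q^*_{\hat{M}^\alpha(\omega)}((s,t),a) = Q^\alpha_t(s,a;\omega)$ (Theorem \ref{thm:q*_arp}), shows $\hat{r}_t \to r$ and $\hat{P}_t \to P$ almost surely (Theorem \ref{thm:arp_limit}), and then finishes with the deterministic perturbation bound of Lemma \ref{lemma:one_step_error} iterated $k$ times to beat the $\gamma\max_{t'}\norm{Q_{t'}-q^*_M}_\infty$ term. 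The payoff of the paper's route is that the only stochastic input needed is the Robbins--Monro theorem for recursions whose target $\Xi = \E[\xi_t]$ is a \emph{constant} (applied to $\hat{r}_t$ and to $\hat{P}_t(s'|s,a)$ separately); all the coupling between coordinates is handled deterministically at the end. Your route, by contrast, front-loads the coupling into the stochastic recursion itself.

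That said, as written your proposal has a real gap exactly where you flag it. The external result available in the paper (Theorem \ref{thm:robbins_monro}) requires $\E[\xi_t] = \Xi$ fixed, so it cannot be ``fed'' your recursion directly in step (4): the drift $(T^*_M Q^\alpha_t)(s,a) - q^*_M(s,a)$ is iterate-dependent and not even asymptotically constant until convergence is already known. The standard repair is to split $\Delta_t$ into a pure-noise accumulator $W_{t+1} = (1-\alpha_t)W_t + \alpha_t w_t$ (which a Robbins--Monro-type theorem with $\Xi = 0$ does kill, using your boundedness step (1)) plus a deterministic envelope satisfying $D_{k+1} \approx (\gamma+\epsilon)D_k$, with an induction over $k$ showing all coordinates eventually enter each successive envelope; the asynchronous visitation times must be aligned across the finitely many $(s,a)$ pairs at each stage of that induction. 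You correctly identify this as the technical heart, but you do not carry it out, and the claim $\limsup_t\norm{\Delta_t}_\infty \le \gamma\limsup_t\norm{\Delta_t}_\infty$ is not something one can assert by ``running the comparison lemma against the worst-case bound'' without that induction. Interestingly, the paper's final iterated application of Lemma \ref{lemma:one_step_error}, producing $\gamma^{k+1}\max_{t'\in[t_0,t)}\norm{Q_{t'}-q^*_M}_\infty + \tfrac34\epsilon(1-\gamma)(1+\gamma+\cdots+\gamma^k)$, is precisely the deterministic shadow of the envelope argument you describe, so the two proofs converge at the finish line even though they diverge everywhere else.
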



  \subsection{The Action-Replay Processes}

  \label{subsec:arp}

  We begin our journey toward convergence by showing that an MDP $M$ can be recovered by a certain limiting process from a trajectory-dependent MDP whose whose optimal action-value functions track the Q-learning iterates on $M$. We will see that this construction serves as the primary proof device for proving the convergence of Q-learning.

  \begin{definition}
    \label{def:arp}

    The ``action-replay process'' of an MDP $M = \langle \mathcal{S}, \mathcal{A}, P, r, \gamma \rangle$ induced by a stepsize sequence $\alpha = (\alpha_t)_{t \in \N_0}$ and a trajectory $\omega \in \Omega_M$ is the MDP $\hat{M}^\alpha(\omega) := \langle \hat{\mathcal{S}}, \mathcal{A}, \hat{P}, \hat{r}, \gamma \rangle$ where $\hat{\mathcal{S}} := \mathcal{S} \times \N_0 \cup \{s_{\textrm{absorb}}\}$;
    \begin{align*}
      \hat{P}((S'_{t'}(\omega), t') | (s, t), a)
        & := \alpha_{t'}\prod_{\tau \in \mathcal{T}_{(s, a)}(\omega) \cap (t', t)}(1 - \alpha_\tau), \\
      \hat{P}(s_{\textrm{absorb}} | (s, t), a)
        & := \prod_{\tau \in \mathcal{T}_{(s, a)}(\omega) \cap [0, t)} (1 - \alpha_{\tau}), \text{and} \\
      \hat{P}(s_{\textrm{absorb}} | s_{\textrm{absorb}}, a)
        & := 1
    \end{align*}
    for $(s, a) \in \mathcal{S} \times \mathcal{A}$, $t \in \N_0$, and $t' \in \mathcal{T}_{(s, a)}(\omega) \cap [0, t)$ as well as $\hat{P}(\cdot | \cdot, \cdot) \equiv \mathbf{0}$ everwhere else; and, finally,
    \begin{align*}
      \hat{r}((s, t), a)
        := r(s, a)\sum_{t' \in \mathcal{T}_{(s, a)}(\omega) \cap [0, t)} \hat{P}((S'_{t'}(\omega), t') | (s, t), a)
    \end{align*}
    for $(s, t) \in \mathcal{S} \times \N_0$ and $a \in \mathcal{A}$ as well as $\hat{r}(\cdot, \cdot) \equiv \mathbf{0}$ everwhere else.
  \end{definition}

  Our next theorem reduces the analysis of Q-learning iterates to analysis of the optimal action-value function of an action-replay process.

  \begin{theorem}
    \label{thm:q*_arp}

    Let $M$ be a finite MDP, let $\alpha$ be a stepsize sequence, and let $(Q_t^\alpha(\omega))_{t \in \N_0}$ be the induced Q-learning iterates on $M$. For every $\omega \in \Omega_M$, $t \in \N_0$, and $(s, a) \in \mathcal{S} \times \mathcal{A}$,
    \begin{align*}
      q^*_{\hat{M}^\alpha(\omega)}((s, t), a) = Q_t^\alpha(s, a; \omega).
    \end{align*}
  \end{theorem}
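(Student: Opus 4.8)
The plan is to prove the identity $q^*_{\hat{M}^\alpha(\omega)}((s,t),a) = Q_t^\alpha(s,a;\omega)$ by induction on $t$, since both sides have a natural recursive structure: the Q-learning iterates are defined by a one-step update, and the states $(s,t)$ of the action-replay process at ``time level'' $t$ transition only to states $(s',t')$ with $t' < t$, so the ARP is in effect a finite-horizon (layered) MDP in which $q^*$ can be computed by backward recursion on the level index. Fix $\omega$ and write $\hat M = \hat M^\alpha(\omega)$, $Q_t = Q_t^\alpha(\cdot;\omega)$. The base case $t = 0$: the state $(s,0)$ has $\mathcal{T}_{(s,a)}(\omega)\cap[0,0) = \emptyset$, so $\hat P(s_{\mathrm{absorb}}\mid (s,0),a) = 1$ (empty product) and $\hat r((s,0),a) = 0$; since $s_{\mathrm{absorb}}$ is absorbing with zero reward, the Bellman optimality equation forces $q^*_{\hat M}(s_{\mathrm{absorb}},\cdot) = 0$ and hence $q^*_{\hat M}((s,0),a) = 0 = Q_0(s,a)$.

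For the inductive step, assume $q^*_{\hat M}((s,t'),a') = Q_{t'}(s,a';\omega)$ for all $t' \le t$ and all $(s,a')$, and compute $q^*_{\hat M}((s,t+1),a)$ via the fixed-point equation $q^*_{\hat M} = T^*_{\hat M} q^*_{\hat M}$. Expanding the right-hand side at $((s,t+1),a)$ gives
\begin{align*}
  q^*_{\hat M}((s,t+1),a)
  = \hat r((s,t+1),a) + \gamma\!\!\sum_{t'\in\mathcal{T}_{(s,a)}(\omega)\cap[0,t+1)}\!\!\hat P\big((S'_{t'}(\omega),t')\,\big|\,(s,t+1),a\big)\,\max_{a'}q^*_{\hat M}((S'_{t'}(\omega),t'),a'),
\end{align*}
the $s_{\mathrm{absorb}}$ term vanishing because $q^*_{\hat M}(s_{\mathrm{absorb}},\cdot)=0$. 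Using the definition of $\hat r$ and the inductive hypothesis to replace $\max_{a'}q^*_{\hat M}((S'_{t'}(\omega),t'),a')$ by $\max_{a'}Q_{t'}(S'_{t'}(\omega),a';\omega)$, this becomes
\begin{align*}
  q^*_{\hat M}((s,t+1),a)
  = \sum_{t'\in\mathcal{T}_{(s,a)}(\omega)\cap[0,t+1)} \hat P\big((S'_{t'}(\omega),t')\,\big|\,(s,t+1),a\big)\Big(r(s,a) + \gamma\max_{a'}Q_{t'}(S'_{t'}(\omega),a';\omega)\Big).
\end{align*}
It then remains to show this weighted sum equals $Q_{t+1}(s,a;\omega)$. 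Here the key is a telescoping/normalization identity for the transition weights: with $w_{t'} := \hat P((S'_{t'}(\omega),t')\mid (s,t+1),a) = \alpha_{t'}\prod_{\tau\in\mathcal{T}_{(s,a)}(\omega)\cap(t',t+1)}(1-\alpha_\tau)$, one checks that if $t\in\mathcal{T}_{(s,a)}(\omega)$ then $w_t = \alpha_t$ and $w_{t'} = (1-\alpha_t)\,\hat P((S'_{t'}(\omega),t')\mid (s,t),a)$ for $t' < t$, whereas if $t\notin\mathcal{T}_{(s,a)}(\omega)$ then $w_{t'} = \hat P((S'_{t'}(\omega),t')\mid(s,t),a)$ for all relevant $t'$. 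Matching this against the two cases in the Q-learning recursion — and using the inductive hypothesis once more in the form $\sum_{t'<t} \hat P((S'_{t'}(\omega),t')\mid(s,t),a)(r(s,a)+\gamma\max_{a'}Q_{t'}(S'_{t'}(\omega),a';\omega)) = q^*_{\hat M}((s,t),a) = Q_t(s,a;\omega)$ — yields $q^*_{\hat M}((s,t+1),a) = Q_{t+1}(s,a;\omega)$ in both cases, completing the induction.

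Two points deserve care and are where I expect the real work to lie. First, one must justify that $q^*_{\hat M}$ genuinely satisfies the backward recursion — i.e. that the Bellman optimality equation, whose unique solvability is guaranteed by Theorem \ref{thm:existence_of_q*}, actually determines the values level-by-level; this follows because every transition from level $t+1$ lands in a strictly lower level or in $s_{\mathrm{absorb}}$, so there is no circular dependence, but it should be stated cleanly (e.g. the layered structure means the finite restriction to levels $\le t+1$ is itself a well-posed finite-horizon problem). Second, the combinatorial bookkeeping of the products $\prod_{\tau\in\mathcal{T}_{(s,a)}(\omega)\cap(t',t)}(1-\alpha_\tau)$ as one passes from level $t$ to $t+1$ — splitting off the factor $(1-\alpha_t)$ exactly when $t$ is an occurrence of $(s,a)$ — is the crux, and also underlies checking that $\hat P(\cdot\mid(s,t),a)$ is a probability distribution (the weights $w_{t'}$ together with the $s_{\mathrm{absorb}}$ mass sum to $1$ by a telescoping product), which is implicitly needed for $\hat M$ to be a legitimate MDP. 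Everything else is routine substitution.
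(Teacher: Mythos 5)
Your proposal is correct and follows essentially the same route as the paper: induction on $t$, first forcing $q^*_{\hat M}(s_{\textrm{absorb}},\cdot)=0$ from the Bellman equation, then rewriting $q^*_{\hat M}((s,t+1),a)$ as a weighted sum over $t'\in\mathcal{T}_{(s,a)}(\omega)\cap[0,t+1)$ and splitting off the factor $(1-\alpha_t)$ (resp.\ leaving the weights unchanged) according to whether $t\in\mathcal{T}_{(s,a)}(\omega)$ --- your ``telescoping/normalization identity'' is precisely the paper's Lemma \ref{lemma:arp_dynamics}. Your worry about the Bellman equation determining values ``level-by-level'' is unnecessary: $q^*_{\hat M}$ already exists and is unique by Theorem \ref{thm:existence_of_q*}, and the induction only uses the fixed-point identity it satisfies, not a backward solve.
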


  Before we prove the theorem, we strongly encourage the reader to prove the following lemma that shows that, while the dynamics of the action-replay processes may look intimidating at a first glance, their recursive form is much more pleasant to work with.

  \begin{lemma}
    \label{lemma:arp_dynamics}

    With all terms as in Definition \ref{def:arp}, $(s, a) \in \mathcal{S} \times \mathcal{A}$, and $\omega \in \Omega_M$, we have
    \begin{align*}
      \hat{P}((S'_{t'}(\omega), t') | (s, t + 1), a)
        = \hat{P}((S'_{t'}(\omega), t') | (s, t), a)
    \end{align*}
    for any $t \notin \mathcal{T}_{(s, a)}(\omega)$ and $t' \in \mathcal{T}_{(s, a)}(\omega) \cap [0, t + 1)$ as well as
    \begin{align*}
      \hat{P}((S'_t(\omega), t) | (s, t + 1), a)
        = \alpha_t
    \end{align*}
    and
    \begin{align*}
      \hat{P}((S'_{t'}(\omega), t') | (s, t + 1), a)
        = (1 - \alpha_t)\hat{P}((S'_{t'}(\omega), t') | (s, t), a)
    \end{align*}
    for any $t \in \mathcal{T}_{(s, a)}(\omega)$ and $t' \in \mathcal{T}_{(s, a)}(\omega) \cap [0, t)$.
  \end{lemma}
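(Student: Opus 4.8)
The plan is to verify the three claimed identities directly from the product formula defining $\hat{P}$ in Definition~\ref{def:arp}, reasoning about how the index set $\mathcal{T}_{(s,a)}(\omega) \cap (t', t)$ changes as $t$ is incremented to $t+1$. The key observation throughout is that the difference between $\mathcal{T}_{(s,a)}(\omega) \cap (t', t+1)$ and $\mathcal{T}_{(s,a)}(\omega) \cap (t', t)$ is precisely whether $t$ itself belongs to $\mathcal{T}_{(s,a)}(\omega)$: if $t \notin \mathcal{T}_{(s,a)}(\omega)$ the two interval-intersections are equal, whereas if $t \in \mathcal{T}_{(s,a)}(\omega)$ the larger one picks up the single extra index $\tau = t$, contributing one more factor of $(1 - \alpha_t)$ to the product.

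First I would treat the case $t \notin \mathcal{T}_{(s,a)}(\omega)$. For $t' \in \mathcal{T}_{(s,a)}(\omega) \cap [0, t+1)$, note that since $t \notin \mathcal{T}_{(s,a)}(\omega)$ we actually have $t' < t$, so $t' \in \mathcal{T}_{(s,a)}(\omega) \cap [0,t)$ and the formula for $\hat{P}((S'_{t'}(\omega), t') \mid (s,t), a)$ applies. Then $\mathcal{T}_{(s,a)}(\omega) \cap (t', t+1) = \mathcal{T}_{(s,a)}(\omega) \cap (t', t)$ because no element of $\mathcal{T}_{(s,a)}(\omega)$ lies in $[t, t+1) = \{t\}$; hence the two products agree factor-for-factor and $\hat{P}((S'_{t'}(\omega), t') \mid (s, t+1), a) = \hat{P}((S'_{t'}(\omega), t') \mid (s, t), a)$, as claimed.

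Next I would treat the case $t \in \mathcal{T}_{(s,a)}(\omega)$. For the first identity there, take $t' = t$: then $\mathcal{T}_{(s,a)}(\omega) \cap (t, t+1) = \emptyset$, so the product defining $\hat{P}((S'_t(\omega), t) \mid (s, t+1), a)$ is empty and equals $1$, leaving just the prefactor $\alpha_t$. For the second identity, take $t' \in \mathcal{T}_{(s,a)}(\omega) \cap [0, t)$; now $\mathcal{T}_{(s,a)}(\omega) \cap (t', t+1) = \big(\mathcal{T}_{(s,a)}(\omega) \cap (t', t)\big) \cup \{t\}$ as a disjoint union, since $t \in \mathcal{T}_{(s,a)}(\omega)$ and $t \in (t', t+1)$. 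Splitting off the $\tau = t$ factor gives
\begin{align*}
  \hat{P}((S'_{t'}(\omega), t') \mid (s, t+1), a)
    = \alpha_{t'} (1 - \alpha_t) \prod_{\tau \in \mathcal{T}_{(s,a)}(\omega) \cap (t', t)} (1 - \alpha_\tau)
    = (1 - \alpha_t)\,\hat{P}((S'_{t'}(\omega), t') \mid (s, t), a).
\end{align*}

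I do not expect any serious obstacle here: the entire argument is bookkeeping about the interval $(t', t)$ versus $(t', t+1)$ and whether the single new integer $t$ is an occurrence of $(s,a)$. The only place requiring mild care is making sure the formula for $\hat{P}(\,\cdot \mid (s, t+1), a)$ is actually in force — i.e.\ that the relevant $t'$ lies in $\mathcal{T}_{(s,a)}(\omega) \cap [0, t+1)$ rather than in the ``everywhere else'' regime where $\hat{P} \equiv \mathbf{0}$ — which is why in the first case one must first observe $t' \ne t$. One might also remark that the same disjoint-union decomposition, applied to the $s_{\textrm{absorb}}$ transition probabilities, yields the analogous recursions for $\hat{P}(s_{\textrm{absorb}} \mid (s, t+1), a)$, though these are not needed for the stated lemma.
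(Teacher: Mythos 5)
Your proof is correct and is exactly the intended argument: the paper leaves this lemma as an exercise for the reader, and the direct bookkeeping about whether the single new integer $t$ enters the index set $\mathcal{T}_{(s,a)}(\omega)\cap(t',\cdot)$ is precisely the computation it has in mind. Your care in checking that $t' \neq t$ in the first case (so that the product formula, rather than the ``everywhere else'' zero clause, applies) is the one subtlety, and you handle it correctly.
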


  \begin{proof}[Proof of Theorem \ref{thm:q*_arp}]
    Fix $\omega \in \Omega_M$ and let $\hat{M}^\alpha(\omega) = \langle \hat{\mathcal{S}}, \mathcal{A}, \hat{P}, \hat{r}, \gamma \rangle$.

    We begin by establishing an extremely useful form for the optimal action-values of $\hat{M}^\alpha(\omega)$. To that end, notice that, for any $a \in \mathcal{A}$,
    \begin{align*}
      q^*_{\hat{M}^\alpha(\omega)}(s_{\textrm{absorb}}, a)
        & = T^*_{\hat{M}^\alpha(\omega)}q^*_{\hat{M}^\alpha(\omega)}(s_{\textrm{absorb}}, a) \\
        & = \hat{r}(s_{\textrm{absorb}}, a) +
            \gamma
              \sum_{\sigma' \in \mathcal{S}_{\hat{M}^\alpha(\omega)}}\hat{P}(\sigma' | s_{\textrm{absorb}}, a)
              \max_{a' \in \mathcal{A}}
                q^*_{\hat{M}^\alpha(\omega)}(\sigma', a') \\
        & = \gamma\max_{a' \in \mathcal{A}} q^*_{\hat{M}^\alpha(\omega)}(s_{\textrm{absorb}}, a'),
    \end{align*}
    so, taking a maximum over $a \in \mathcal{A}$, we must have $\max_{a' \in \mathcal{A}} q^*_{\hat{M}^\alpha(\omega)}(s_{\textrm{absorb}}, a') = 0$ and hence
    \begin{align}
      \label{eq:arp_q*}
      q^*_{\hat{M}^\alpha(\omega)}((s, k), a)
        =&\ T^*_{\hat{M}^\alpha(\omega)}q^*_{\hat{M}^\alpha(\omega)}((s, k), a) \nonumber\\
        =&\ \hat{r}((s, k), a) +
            \gamma
            \sum_{\sigma' \in \mathcal{S}_{\hat{M}^\alpha(\omega)}}
              \hat{P}(\sigma' | (s, k), a)
              \max_{a' \in \mathcal{A}}
                q^*_{\hat{M}^\alpha(\omega)}(\sigma', a') \nonumber\\
        =&\ r(s, a)\sum_{t' \in \mathcal{T}_{(s, a)}(\omega) \cap [0, k)} \hat{P}((S'_{t'}(\omega), t') | (s, k), a) + \nonumber\\
         &\ \gamma
            \hat{P}(s_{\textrm{absorb}} | (s, k), a)
              \cancelto{0}{
                \max_{a' \in \mathcal{A}}
                  q^*_{\hat{M}^\alpha(\omega)}(s_{\textrm{absorb}}, a')
              } + \\
         &\ \gamma
            \sum_{t' \in \mathcal{T}_{(s, a)}(\omega) \cap [0, k)}
              \hat{P}((S'_{t'}(\omega), t') | (s, k), a)
              \max_{a' \in \mathcal{A}}
                q^*_{\hat{M}^\alpha(\omega)}((S'_{t'}(\omega), t'), a') \nonumber\\
        =&\ \sum_{t' \in \mathcal{T}_{(s, a)}(\omega) \cap [0, k)}
              \hat{P}((S'_{t'}(\omega), t') | (s, k), a)
              \p{
                r(s, a) +
                \gamma\max_{a' \in \mathcal{A}} q^*_{\hat{M}^\alpha(\omega)}((S'_{t'}(\omega), t'), a')
              } \nonumber
    \end{align}
    for any $(s, a) \in \mathcal{S} \times \mathcal{A}$ and $k \in \N_0$.
    \clearpage

    With this in mind, we now prove the theorem by induction on $t$. Since $[0, 0) = \varnothing$, Equation (\ref{eq:arp_q*}) yields $q^*_{\hat{M}^\alpha(\omega)}((s, 0), a) = 0 = Q_0^\alpha(s, a; \omega)$ for any $(s, a) \in \mathcal{S} \times \mathcal{A}$ and hence the base case holds. As for the inductive step, let $t \in \N_0$, assume the claim holds for $t$, and let $(s, a) \in \mathcal{S} \times \mathcal{A}$. We consider two cases.

    If $t \notin \mathcal{T}_{(s, a)}(\omega)$, then, by Equation (\ref{eq:arp_q*}) and Lemma \ref{lemma:arp_dynamics}, we have
    \begin{align*}
      q^*_{\hat{M}^\alpha(\omega)}((s, t + 1), a)
        & = \sum_{t' \in \mathcal{T}_{(s, a)}(\omega) \cap [0, t + 1)}
              \hat{P}((S'_{t'}(\omega), t') | (s, t + 1), a)
              \p{
                r(s, a) +
                \gamma\max_{a' \in \mathcal{A}} q^*_{\hat{M}^\alpha(\omega)}((S'_{t'}(\omega), t'), a')
              } \\
        & = \sum_{t' \in \mathcal{T}_{(s, a)}(\omega) \cap [0, t)}
              \hat{P}((S'_{t'}(\omega), t') | (s, t), a)
              \p{
                r(s, a) +
                \gamma\max_{a' \in \mathcal{A}} q^*_{\hat{M}^\alpha(\omega)}((S'_{t'}(\omega), t'), a')
              } \\
        & = q^*_{\hat{M}^\alpha(\omega)}((s, t), a) \\
        & = Q_t^\alpha(s, a; \omega) \\
        & = Q_{t + 1}^\alpha(s, a; \omega).
    \end{align*}

    Likewise, if $t \in \mathcal{T}_{(s, a)}(\omega)$, then, by Equation (\ref{eq:arp_q*}) and Lemma \ref{lemma:arp_dynamics},
    \begin{align*}
      q^*_{\hat{M}^\alpha(\omega)}((s, t + 1), a)
        =&\ \sum_{t' \in \mathcal{T}_{(s, a)}(\omega) \cap [0, t + 1)}
              \hat{P}((S'_{t'}(\omega), t') | (s, t + 1), a)
              \p{
                r(s, a) +
                \gamma\max_{a' \in \mathcal{A}} q^*_{\hat{M}^\alpha(\omega)}((S'_{t'}(\omega), t'), a')
              } \\
        =&\ (1 - \alpha_t)
            \sum_{t' \in \mathcal{T}_{(s, a)}(\omega) \cap [0, t)}
                \hat{P}((S'_{t'}(\omega), t') | (s, t), a)
                \p{
                  r(s, a) +
                  \gamma\max_{a' \in \mathcal{A}} q^*_{\hat{M}^\alpha(\omega)}((S'_{t'}(\omega), t'), a')
                } \\
         &\ + \alpha_t
            \p{
              r(s, a) +
              \gamma\max_{a' \in \mathcal{A}} q^*_{\hat{M}^\alpha(\omega)}((S'_t(\omega), t), a')
            } \\
        =&\ (1 - \alpha_t)q^*_{\hat{M}^\alpha(\omega)}((s, t), a) +
            \alpha_t
            \p{
              r(s, a) +
              \gamma\max_{a' \in \mathcal{A}} q^*_{\hat{M}^\alpha(\omega)}((S'_t(\omega), t), a')
            } \\
        =&\ (1 - \alpha_t)Q_t^\alpha(s, a; \omega) +
            \alpha_t
            \p{
              r(s, a) +
              \gamma\max_{a' \in \mathcal{A}} Q_t^\alpha(S'_t(\omega), a'; \omega)
            } \\
        =&\ Q_{t + 1}^\alpha(s, a; \omega)
    \end{align*}
    and hence the inductive step holds as well.
  \end{proof}

  At the beginning of Subsection \ref{subsec:arp}, we promised that an MDP can be recovered from its action-replay process via a limiting procedure; we now make good on that promise.

  \begin{theorem}
    \label{thm:arp_limit}

    Let $M = \langle \mathcal{S}, \mathcal{A}, P, r, \gamma \rangle$ be an MDP and let $(\Pb, \alpha) \in \nu(M)$ (recall Definition \ref{def:hypotheses}). Then, for any $(s, a, s') \in \mathcal{S} \times \mathcal{A} \times \mathcal{S}$ and $\Pb$-almost all $\omega \in \Omega_M$,
    \begin{align*}
      \hat{r}((s, t), a; \omega)
        \xrightarrow{t \to \infty} r(s, a)
    \end{align*}
    and
    \begin{align*}
      \sum_{t' \in \mathcal{T}_{(s, a)}(\omega) \cap [0, t)} \hat{P}((s', t') | (s, t), a; \omega)
        \xrightarrow{t \to \infty} P(s' | s, a)
    \end{align*}
    where $\hat{M}^\alpha(\omega) = \langle \hat{\mathcal{S}}, \mathcal{A}, \hat{P}(\omega), \hat{r}(\omega), \gamma \rangle$.
  \end{theorem}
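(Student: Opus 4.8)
The plan is to establish the two limits separately. In each case I would rewrite the quantity of interest as a scalar sequence indexed by $t$ whose behaviour is controlled entirely by whether $t$ belongs to $\mathcal{T}_{(s,a)}(\omega)$: the reward sequence turns out to be explicit, while the transition sequence is a Robbins--Monro iteration to which the single external stochastic approximation result is applied. Fix $(s,a,s')$ and work on the $\Pb$-full event of Remark \ref{rmk:infinite_occurences} where $\mathcal{T}_{(s,a)}(\omega)$ is infinite, enumerated $t_1 < t_2 < \cdots$. I would begin by recording the elementary fact that $\alpha_\tau \in [0,1]$ together with $\sum_{\tau \in \mathcal{T}_{(s,a)}(\omega)}\alpha_\tau = \infty$ forces the partial products $p_t := \prod_{\tau \in \mathcal{T}_{(s,a)}(\omega)\cap[0,t)}(1-\alpha_\tau)$ to decrease to $0$ as $t \to \infty$ (use $1-x \le e^{-x}$, noting that a single factor equal to $1$ would already kill the product).

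For the reward, observe that for $\mathcal{T}_{(s,a)}(\omega)\cap[0,t) = \{\tau_1 < \cdots < \tau_m\}$ the telescoping identity $\sum_{j=1}^m \alpha_{\tau_j}\prod_{i=j+1}^m(1-\alpha_{\tau_i}) = 1 - \prod_{i=1}^m(1-\alpha_{\tau_i})$ (equivalently: $\hat P(\cdot\,|\,(s,t),a)$ is a probability measure and $\hat P(s_{\textrm{absorb}}\,|\,(s,t),a) = p_t$) gives $\sum_{t' \in \mathcal{T}_{(s,a)}(\omega)\cap[0,t)}\hat P((S'_{t'}(\omega),t')\,|\,(s,t),a) = 1 - p_t$. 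By the definition of $\hat r$, therefore, $\hat r((s,t),a;\omega) = r(s,a)(1-p_t) \xrightarrow{t\to\infty} r(s,a)$.

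For the transition kernel, write $G_t := \sum_{t' \in \mathcal{T}_{(s,a)}(\omega)\cap[0,t)}\hat P((s',t')\,|\,(s,t),a;\omega)$ and note $\hat P((s',t')\,|\,(s,t),a) = \mathbbm{1}[S'_{t'}(\omega)=s']\,\hat P((S'_{t'}(\omega),t')\,|\,(s,t),a)$ since $\hat P$ vanishes off the states $(S'_{t'}(\omega),t')$. Running the same case split as in the proof of Theorem \ref{thm:q*_arp}, via Lemma \ref{lemma:arp_dynamics}, yields $G_{t+1} = G_t$ when $t \notin \mathcal{T}_{(s,a)}(\omega)$ and $G_{t+1} = (1-\alpha_t)G_t + \alpha_t\mathbbm{1}[S'_t(\omega)=s']$ when $t \in \mathcal{T}_{(s,a)}(\omega)$. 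Hence the occurrence times partition $\N_0$ into the finite blocks $\{0,\dots,t_1\}, \{t_1+1,\dots,t_2\},\dots$ on the $k$-th of which $G_t$ is constant equal to $Y_k := G_{t_k}$, so $G_t \to \lim_k Y_k$ once that limit is shown to exist, and $Y_1 = 0$ with $Y_{k+1} = (1-\alpha_{t_k})Y_k + \alpha_{t_k}\mathbbm{1}[S'_{t_k}(\omega)=s']$. Conditioning on $(S_0,A_0,S'_0,\dots,S_{t_k},A_{t_k})$ — decomposing over the value of the stopping time $t_k$ and applying Definition \ref{def:trajectory_measure} at each fixed time, using that $(S_{t_k},A_{t_k}) = (s,a)$ by construction — gives $\E[\mathbbm{1}[S'_{t_k}(\omega)=s'] \mid S_0,A_0,S'_0,\dots,S_{t_k},A_{t_k}] = P(s'\,|\,s,a)$, so $\xi_k := \mathbbm{1}[S'_{t_k}(\omega)=s'] - P(s'\,|\,s,a)$ is a $[-1,1]$-valued martingale-difference sequence, $\alpha_{t_k}$ and $Y_k$ are adapted, and $\sum_k \alpha_{t_k} = \infty$, $\sum_k \alpha_{t_k}^2 < \infty$ hold $\Pb$-a.s. by Definition \ref{def:hypotheses}. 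Writing the recursion as $Y_{k+1} = (1-\alpha_{t_k})Y_k + \alpha_{t_k}(P(s'\,|\,s,a) + \xi_k)$ and invoking the external stochastic approximation convergence theorem yields $Y_k \to P(s'\,|\,s,a)$ $\Pb$-a.s., hence $G_t \to P(s'\,|\,s,a)$. Intersecting the exceptional null sets over all $(s,a,s') \in \mathcal{S}\times\mathcal{A}\times\mathcal{S}$ (a countable union) completes the proof.

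The reward claim and the $G_t$ recursion are routine once Lemma \ref{lemma:arp_dynamics} is in hand, so the only real content is the last step; I expect its only genuine difficulty to be measurability bookkeeping — that the occurrence times are stopping times, that the deterministic stepsize sampled at the random index $t_k$ remains adapted, and that the one-step conditional law in Definition \ref{def:trajectory_measure} can be invoked at $t_k$ — all of which is precisely what is packaged into the cited stochastic approximation theorem. A citation-free alternative would solve the linear recursion for $Y_k - P(s'\,|\,s,a)$ explicitly and bound the resulting martingale-transform series by an $L^2$-plus-almost-supermartingale argument, but this then has to confront the merely-almost-sure (not $L^1$) summability of the random-index stepsizes, which is exactly the friction the external result is designed to absorb.
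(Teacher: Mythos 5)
Your proof is correct, and for the transition-kernel limit it is essentially the paper's argument: both of you derive the recursion $Y_{k+1} = (1-\alpha_{T_k})Y_k + \alpha_{T_k}\mathbbm{1}(S'_{T_k}=s')$ along the occurrence times via Lemma \ref{lemma:arp_dynamics}, verify by decomposing over $\{T_k = t\}$ and invoking Definition \ref{def:trajectory_measure} that the driving indicator has mean $P(s'\mid s,a)$, apply the external Robbins--Monro theorem, and pass from the subsequence to all $t$ by constancy between occurrences. (You check the conditional-mean, martingale-difference form; the paper checks only the unconditional mean $\E[\mathbbm{1}(S'_{T_k}=s')] = P(s'\mid s,a)$, which is all its Theorem \ref{thm:robbins_monro} as stated requires --- either suffices.) Where you genuinely diverge is the reward limit: the paper runs the Robbins--Monro machinery a second time with the constant target $r(s,a)$, whereas you use the telescoping identity $\sum_{j=1}^m \alpha_{\tau_j}\prod_{i=j+1}^m(1-\alpha_{\tau_i}) = 1 - \prod_{i=1}^m(1-\alpha_{\tau_i})$ to get the closed form $\hat r((s,t),a;\omega) = r(s,a)(1-p_t)$ with $p_t = \prod_{\tau\in\mathcal{T}_{(s,a)}(\omega)\cap[0,t)}(1-\alpha_\tau) \leq e^{-\sum_{\tau\in\mathcal{T}_{(s,a)}(\omega)\cap[0,t)}\alpha_\tau} \to 0$. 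That is a clean simplification: it turns the reward limit into a deterministic statement on the full-measure event where $\sum_{\tau\in\mathcal{T}_{(s,a)}(\omega)}\alpha_\tau = \infty$, dispenses with one invocation of the stochastic approximation theorem, and identifies $1-p_t$ as the total non-absorbing mass of $\hat P(\cdot\mid(s,t),a)$ --- essentially the same computation the paper later redoes inside Lemma \ref{lemma:low_level_bound}. Both routes are complete; yours trades an appeal to the external theorem for a one-line induction.
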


  The proof rests on a classic result from the theory of stochastic approximation.

  \begin{theorem}[The Robbins--Monro Theorem]
    \label{thm:robbins_monro}

    For any familes of random variables $(\beta_t)_{t \in \N_0}$, $(\xi_t)_{t \in \N_0}$, and $(X_t)_{t \in \N_0}$ such that $(\beta_t)_{t \in \N_0}$ is non-negative and satisfies $\sum_{t \in \mathcal{T}_{(s, a)}(\omega)} \beta_t = \infty$ as well as $\sum_{t \in \mathcal{T}_{(s, a)}(\omega)} \beta_t^2 < \infty$ a.s., $\E[\xi_t] = \Xi$ for all $t \in \N_0$, $(\xi_t)_{t \in \N_0}$ is bounded a.s., and
    \begin{align*}
      X_{t + 1} = (1 - \beta_t) X_t + \beta_t \xi_t
    \end{align*}
    for all $t \in \N_0$, we have that $X_t \to \Xi$ a.s.
  \end{theorem}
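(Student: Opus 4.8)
The plan is to convert the recursion into a mean-zero stochastic-approximation update and then control the squared error with a supermartingale argument. Before anything else I would pin down the probabilistic structure, since the hypothesis ``$\E[\xi_t]=\Xi$'' is only usable in its conditional form: I take a filtration $(\mathcal{F}_t)_{t \in \N_0}$ for which $X_t$ and $\beta_t$ are $\mathcal{F}_t$-measurable and $\E[\xi_t \mid \mathcal{F}_t]=\Xi$ a.s., so that $\eta_t := \xi_t - \Xi$ is a bounded martingale-difference sequence. This is the first and most delicate point, because without conditional mean-zero the cross terms below need not vanish. I also read the two Robbins--Monro series as $\sum_t \beta_t = \infty$ and $\sum_t \beta_t^2 < \infty$ a.s.\ (equivalently $\beta_t = 0$ off the occurrence set $\mathcal{T}_{(s,a)}(\omega)$, so updates happen exactly at the a.s.\ infinitely many occurrence times). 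Subtracting the target and writing $Y_t := X_t - \Xi$ turns the recursion into $Y_{t+1} = (1-\beta_t)Y_t + \beta_t \eta_t$, so it suffices to show $Y_t \to 0$ a.s.

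Next I would track the squared error $V_t := Y_t^2 \ge 0$. Squaring the recursion, taking $\E[\,\cdot \mid \mathcal{F}_t]$, using $\E[\eta_t \mid \mathcal{F}_t]=0$ to annihilate the cross term and $\abs{\eta_t} \le C$ (boundedness of $\xi_t$) to bound $\E[\eta_t^2 \mid \mathcal{F}_t] \le C^2$, and finally the elementary estimate $(1-\beta_t)^2 \le 1 + \beta_t^2 - \beta_t$ valid for $\beta_t \in [0,1]$, yields
\begin{align*}
  \E\br{V_{t+1} \mid \mathcal{F}_t}
    = \p{1-\beta_t}^2 V_t + \beta_t^2\, \E\br{\eta_t^2 \mid \mathcal{F}_t}
    \le \p{1 + \beta_t^2} V_t + C^2 \beta_t^2 - \beta_t V_t.
\end{align*}
This is precisely the hypothesis of the Robbins--Siegmund almost-supermartingale convergence theorem with nonnegative, $\mathcal{F}_t$-measurable coefficients $a_t = \beta_t^2$, $c_t = C^2 \beta_t^2$, and $b_t = \beta_t V_t$. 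Since $\sum_t a_t$ and $\sum_t c_t$ are both finite a.s.\ by $\sum_t \beta_t^2 < \infty$, that theorem gives, a.s., that $V_t$ converges to a finite limit $L \ge 0$ and that $\sum_t \beta_t V_t < \infty$.

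To finish I would rule out $L > 0$: were $L > 0$, then $V_t \ge L/2$ for all large $t$, so $\sum_t \beta_t V_t \ge \tfrac{L}{2} \sum_{t \text{ large}} \beta_t = \infty$ by $\sum_t \beta_t = \infty$, contradicting the summability of $\sum_t \beta_t V_t$. Hence $L = 0$, i.e.\ $Y_t^2 \to 0$ and $X_t \to \Xi$ a.s.

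The main obstacle is twofold. First is the modeling subtlety already noted: identifying the filtration that upgrades ``$\E[\xi_t]=\Xi$'' into genuine martingale-difference noise, without which the cross term survives and the theorem can fail; everything downstream hinges on this. Second, the argument leans on Robbins--Siegmund, which is slightly more than the single external input the paper permits. For a fully self-contained route I would instead solve the recursion explicitly as $Y_{t+1} = \pi_t Y_0 + \pi_t \sum_{\tau \le t} \beta_\tau \eta_\tau / \pi_\tau$ with $\pi_t := \prod_{\tau \le t}(1-\beta_\tau)$, use $\pi_t \le \exp\p{-\sum_{\tau \le t}\beta_\tau} \to 0$ to kill the homogeneous term, and handle the noise term $\pi_t \sum_{\tau \le t} \beta_\tau \eta_\tau / \pi_\tau$ by first applying Doob's $L^2$ martingale convergence theorem to the (locally) $L^2$-bounded martingale $N_t := \sum_{\tau \le t} \beta_\tau \eta_\tau$, whose increments satisfy $\sum_\tau \E[\beta_\tau^2 \eta_\tau^2] \le C^2 \sum_\tau \beta_\tau^2 < \infty$, and then invoking Kronecker's lemma with the weights $1/\pi_\tau \uparrow \infty$ to convert the a.s.\ convergence of $N_t$ into $\pi_t \sum_{\tau \le t} \beta_\tau \eta_\tau / \pi_\tau \to 0$. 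In this route the obstacle migrates to controlling the $t$-dependent weights $\prod_{\tau < j \le t}(1-\beta_j)$, which is exactly what the ratio $\pi_t / \pi_\tau$ and Kronecker's lemma are designed to absorb (with the degenerate case $\beta_\tau = 1$, where $\pi_\tau = 0$, handled by restarting the product).
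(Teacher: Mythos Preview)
The paper does not prove this theorem at all: it is precisely the ``one external result from the field of stochastic approximation'' that the introduction flags, and immediately after the statement the authors simply refer the reader to Theorem~2.3.1 of Kushner--Clark and to the original Robbins--Monro paper. So there is no in-paper proof to compare against.

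That said, your proposal is a sound and standard argument, and you have correctly isolated the one genuine hazard. The statement as written in the paper only postulates the unconditional $\E[\xi_t]=\Xi$ (and even leaks the extraneous symbol $\mathcal{T}_{(s,a)}(\omega)$ into the hypotheses), which is too weak for the theorem to be true; you need the conditional version $\E[\xi_t\mid\mathcal{F}_t]=\Xi$ with $\beta_t$ predictable, exactly as you assume, and in the paper's two applications (the reward and transition limits in Theorem~\ref{thm:arp_limit}) this stronger hypothesis does hold. Given that reading, both of your routes work: the Robbins--Siegmund supermartingale argument is clean and your elementary inequality $(1-\beta)^2 \le 1+\beta^2-\beta$ holds for all $\beta\ge 0$, so no upper bound on $\beta_t$ is needed there; the alternative explicit-solution route via $\pi_t\to 0$ plus martingale convergence for $\sum_\tau \beta_\tau\eta_\tau$ plus Kronecker is also correct, with the only care point being that $\sum_\tau \beta_\tau^2<\infty$ is assumed almost surely rather than in $L^1$, so the martingale convergence step should indeed go through a localization (which you flag). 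Either approach is more than the paper itself provides.
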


  A statement and proof of the theorem can be found under Theorem 2.3.1 in \citep{nla.cat-vn954258} and its original, weaker variant (quadratic mean convergence rather than almost sure convergence) is stated and proved in \citep{robbins1951stochastic}.

  \begin{proof}[Proof of Theorem \ref{thm:arp_limit}]
    Fix $(s, a, s') \in \mathcal{S} \times \mathcal{A} \times \mathcal{S}$ and discard a $\Pb$-null set from $\Omega_M$ so that $\sum_{t \in \mathcal{T}_{(s, a)}(\omega)} \alpha_t = \infty$ and $\sum_{t \in \mathcal{T}_{(s, a)}(\omega)} \alpha_t^2 < \infty$ for $\omega \in \Omega_M$. Furthermore, for any $k \in \N_0$ and $\omega \in \Omega_M$, let $T_k(\omega)$ be the $k$\textsuperscript{th} smallest element of $\mathcal{T}_{(s, a)}(\omega)$ (where $T_0(\omega) := \min{\mathcal{T}_{(s, a)}(\omega)}$), which is well-defined by Remark \ref{rmk:infinite_occurences}.

    We now show that the reward limit holds. To that end, for $t \in \N_0$ and $\omega \in \Omega_M$, define
    \begin{align*}
      X_t(\omega)
       := \hat{r}((s, t), a; \omega)
        = r(s, a)\sum_{t' \in \mathcal{T}_{(s, a)}(\omega) \cap [0, t)}
            \hat{P}((S'_{t'}(\omega), t') | (s, t), a; \omega).
    \end{align*}
    Then, for any $t \in \N_0$ and $\omega \in \Omega_M$, $t \notin \mathcal{T}_{(s, a)}(\omega)$ implies
    \begin{align*}
      X_{t + 1}(\omega)
        & = r(s, a)\sum_{t' \in \mathcal{T}_{(s, a)}(\omega) \cap [0, t + 1)}
              \hat{P}((S'_{t'}(\omega), t') | (s, t + 1), a; \omega) \\
        & = r(s, a)\sum_{t' \in \mathcal{T}_{(s, a)}(\omega) \cap [0, t)}
              \hat{P}((S'_{t'}(\omega), t') | (s, t), a; \omega) \\
        & = X_t(\omega)
    \end{align*}
    by Lemma \ref{lemma:arp_dynamics}, whereas $t \in \mathcal{T}_{(s, a)}(\omega)$ implies that
    \begin{align*}
      X_{t + 1}(\omega)
        & = r(s, a)\sum_{t' \in \mathcal{T}_{(s, a)}(\omega) \cap [0, t + 1)}
              \hat{P}((S'_{t'}(\omega), t') | (s, t + 1), a; \omega) \\
        & = r(s, a)\p{(1 - \alpha_t)\sum_{t' \in \mathcal{T}_{(s, a)}(\omega) \cap [0, t)}
              \hat{P}((S'_{t'}(\omega), t') | (s, t), a; \omega) + \alpha_t} \\
        & = (1 - \alpha_t)X_t(\omega) + \alpha_t r(s, a)
    \end{align*}
    by Lemma \ref{lemma:arp_dynamics}. In particular, we have
    \begin{align*}
      X_{T_{k + 1}} = (1 - \alpha_{T_k})X_{T_k} + \alpha_{T_k}r(s, a)
    \end{align*}
    for all $k \in \N_0$. By Theorem \ref{thm:robbins_monro}, $X_{T_k}(\omega) \xrightarrow{k \to \infty} r(s, a)$ for $\Pb$-almost all $\omega \in \Omega_M$. Finally, since $(X_t)_{t \in \N_0}$ is constant between the terms of the subsequence $(X_{T_k})_{k \in \N_0}$, we have
    \begin{align*}
      \hat{r}((s, t), a; \omega) = X_t(\omega) \xrightarrow{t \to \infty} r(s, a)
    \end{align*}
    for $\Pb$-almost all $\omega \in \Omega_M$ as well.
    \clearpage

    Next, we show that the dynamics limit holds in an analogous fashion. To that end, for $t \in \N_0$ and $\omega \in \Omega_M$, define
    \begin{align*}
      Y_t(\omega)
        := \sum_{t' \in \mathcal{T}_{(s, a)}(\omega) \cap [0, t)} \hat{P}((s', t') | (s, t), a; \omega).
    \end{align*}
    Then, for any $t \in \N_0$ and $\omega \in \Omega_M$, $t \notin \mathcal{T}_{(s, a)}(\omega)$ implies $Y_{t + 1}(\omega) = Y_t(\omega)$ by Lemma \ref{lemma:arp_dynamics}, whereas $t \in \mathcal{T}_{(s, a)}(\omega)$ implies that
    \begin{align*}
      Y_{t + 1}(\omega)
        & = \sum_{t' \in \mathcal{T}_{(s, a)}(\omega) \cap [0, t + 1)}
              \hat{P}((s', t') | (s, t + 1), a) \\
        & = \sum_{t' \in \mathcal{T}_{(s, a)}(\omega) \cap [0, t + 1)}
              \mathbbm{1}(S'_{t'}(\omega) = s')
              \hat{P}((S'_{t'}(\omega), t') | (s, t + 1), a) \\
        & = (1 - \alpha_t)\sum_{t' \in \mathcal{T}_{(s, a)}(\omega) \cap [0, t)}
              \mathbbm{1}(S'_{t'}(\omega) = s')
              \hat{P}((S'_{t'}(\omega), t') | (s, t), a) +
            \alpha_t \mathbbm{1}(S'_t(\omega) = s') \\
        & = (1 - \alpha_t)\sum_{t' \in \mathcal{T}_{(s, a)}(\omega) \cap [0, t)}
              \hat{P}((s', t') | (s, t), a) +
            \alpha_t \mathbbm{1}(S'_t(\omega) = s') \\
        & = (1 - \alpha_t)Y_t(\omega) + \alpha_t \mathbbm{1}(S'_t(\omega) = s')
    \end{align*}
    by Lemma \ref{lemma:arp_dynamics}. In particular, we have
    \begin{align*}
      Y_{T_{k + 1}} = (1 - \alpha_{T_k})Y_{T_k} + \alpha_{T_k}\mathbbm{1}(S'_{T_k} = s')
    \end{align*}
    for all $k \in \N_0$. But, for any $k \in \N_0$,
    \begin{align*}
      \E[\mathbbm{1}(S'_{T_k} = s')]
        & = \Pb(S'_{T_k} = s') \\
        & = \sum_{t = 0}^\infty \Pb(T_k = t, S'_t = s') \\
        & = \sum_{t = 0}^\infty
              \Pb(\abs{\mathcal{T}_{(s, a)} \cap [0, t)} = k - 1, S_t = s, A_t = a, S'_t = s') \\
        & = \sum_{t = 0}^\infty
              \Pb(\abs{\mathcal{T}_{(s, a)} \cap [0, t)} = k - 1, S_t = s, A_t = a)P(s' | s, a) \\
        & = P(s' | s, a)\sum_{t = 0}^\infty \Pb(T_k = t) \\
        & = P(s' | s, a)
    \end{align*}
    since $\abs{\mathcal{T}_{(s, a)} \cap [0, t)}$ is a $\sigma(S_0, A_0, S'_0, \dots, S_{t - 1}, A_{t - 1})$-measurable random variable and since $\Pb$ is a trajectory measure on $M$. By Theorem \ref{thm:robbins_monro}, $Y_{T_k}(\omega) \xrightarrow{k \to \infty} P(s' | s, a)$ for $\Pb$-almost all $\omega \in \Omega_M$. As $(Y_t)_{t \in \N_0}$ is constant between the terms of the subsequence $(Y_{T_k})_{k \in \N_0}$,
    \begin{align*}
      \sum_{t' \in \mathcal{T}_{(s, a)}(\omega) \cap [0, t)} \hat{P}((s', t') | (s, t), a; \omega)
        = Y_t(\omega)
        \xrightarrow{t \to \infty} P(s' | s, a)
    \end{align*}
    for $\Pb$-almost all $\omega \in \Omega_M$ as well.
  \end{proof}

  \subsection{Proof of Theorem \ref{thm:big_boi}}

  \label{subsec:big_boi_proof}

  Having tamed the action-replay processes, all of the conceptual pieces are now in place to prove the convergence of Q-learning.
  For the sake of digestibility, we have factored out some of the technical heavy lifting into the following two lemmas.

  \begin{lemma}
    \label{lemma:low_level_bound}

    Let $M = \langle\mathcal{S}, \mathcal{A}, P, r, \gamma\rangle$ be an MDP, $\alpha = (\alpha_t)_{t \in \N_0}$ a stepsize sequence in $[0, 1]$, $\omega \in \Omega_M$, and $(s, a) \in \mathcal{S} \times \mathcal{A}$. For any $\tilde{t}, t \in \N_0$ with $\tilde{t} \leq t$,
    \begin{align*}
      \sum_{t' \in \mathcal{T}_{(s, a)}(\omega) \cap [0, \tilde{t})} \hat{P}((S'_{t'}(\omega), t') | (s, t), a)
        \leq e^{-\sum_{\tau \in \mathcal{T}_{(s, a)}(\omega) \cap [\tilde{t}, t)} \alpha_\tau}
    \end{align*}
    where $\hat{M}^\alpha(\omega) = \langle \hat{\mathcal{S}}, \mathcal{A}, \hat{P}, \hat{r}, \gamma \rangle$.
  \end{lemma}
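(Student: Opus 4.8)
The plan is to unfold the definition of $\hat{P}$ and reduce everything to a telescoping identity plus the inequality $1-x\le e^{-x}$. Write $\mathcal{T}:=\mathcal{T}_{(s,a)}(\omega)$ throughout; note that $\mathcal{T}\cap[0,\tilde t)$, $\mathcal{T}\cap[\tilde t,t)$, and the $\mathcal{T}\cap(t',\tilde t)$ are all finite since they are subsets of finite intervals. By Definition \ref{def:arp}, for each $t'\in\mathcal{T}\cap[0,t)$ we have $\hat{P}((S'_{t'}(\omega),t')\mid(s,t),a)=\alpha_{t'}\prod_{\tau\in\mathcal{T}\cap(t',t)}(1-\alpha_\tau)$, so the left-hand side of the claimed bound is
\[
  S \;:=\; \sum_{t'\in\mathcal{T}\cap[0,\tilde t)} \alpha_{t'}\prod_{\tau\in\mathcal{T}\cap(t',t)}(1-\alpha_\tau).
\]

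First I would split the index set of each inner product. Every $t'$ appearing in the outer sum satisfies $t'<\tilde t\le t$, so $(t',t)$ is the disjoint union $(t',\tilde t)\sqcup[\tilde t,t)$, giving $\prod_{\tau\in\mathcal{T}\cap(t',t)}(1-\alpha_\tau)=\bigl(\prod_{\tau\in\mathcal{T}\cap(t',\tilde t)}(1-\alpha_\tau)\bigr)\bigl(\prod_{\tau\in\mathcal{T}\cap[\tilde t,t)}(1-\alpha_\tau)\bigr)$. The second factor does not depend on $t'$, so it pulls out of the sum:
\[
  S \;=\; \Bigl(\prod_{\tau\in\mathcal{T}\cap[\tilde t,t)}(1-\alpha_\tau)\Bigr)\sum_{t'\in\mathcal{T}\cap[0,\tilde t)}\alpha_{t'}\prod_{\tau\in\mathcal{T}\cap(t',\tilde t)}(1-\alpha_\tau).
\]
Now I claim the remaining sum equals $1-\prod_{\tau\in\mathcal{T}\cap[0,\tilde t)}(1-\alpha_\tau)$, hence lies in $[0,1]$. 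This is the purely algebraic telescoping identity $\sum_k a_{i_k}\prod_{l>k}(1-a_{i_l}) = 1-\prod_k(1-a_{i_k})$ over any finite ordered index set $i_1<i_2<\cdots$, proved by a one-line induction that peels off the smallest index (the terms for the remaining, larger indices are unchanged). Equivalently, one may observe that each summand is exactly $\hat{P}((S'_{t'}(\omega),t')\mid(s,\tilde t),a)$ and that $\hat{P}(\cdot\mid(s,\tilde t),a)$ is a sub-probability on the non-absorbing states, which also yields the bound $\le 1$. Either way, combining with the previous display gives $S\le\prod_{\tau\in\mathcal{T}\cap[\tilde t,t)}(1-\alpha_\tau)$.

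Finally, since each $\alpha_\tau\in[0,1]$ we have $0\le 1-\alpha_\tau\le e^{-\alpha_\tau}$, and multiplying these termwise nonnegative inequalities over $\tau\in\mathcal{T}\cap[\tilde t,t)$ gives $S\le\prod_{\tau\in\mathcal{T}\cap[\tilde t,t)}e^{-\alpha_\tau}=e^{-\sum_{\tau\in\mathcal{T}\cap[\tilde t,t)}\alpha_\tau}$, which is the assertion. I do not expect a genuine obstacle here: every step is elementary. The only place requiring a little care is the interval bookkeeping — verifying $(t',t)=(t',\tilde t)\sqcup[\tilde t,t)$ when $t'<\tilde t\le t$, and checking the boundary cases $\tilde t=0$ (where $S$ is an empty sum, so the bound is trivial) and $\tilde t=t$ (where the exponent is an empty sum, so the bound reads $S\le 1$). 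The telescoping identity is the single modest "insight," and it is entirely standard.
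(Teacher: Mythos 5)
Your proof is correct and takes essentially the same route as the paper's: both arguments reduce the left-hand side to the intermediate bound $\prod_{\tau \in \mathcal{T}_{(s,a)}(\omega) \cap [\tilde{t}, t)}(1 - \alpha_\tau)$ via the same telescoping identity and then finish with $1 - \alpha \leq e^{-\alpha}$. The only cosmetic difference is that the paper realizes the telescoping by adding the nonnegative absorbing-state mass $\prod_{\tau \in \mathcal{T}_{(s,a)}(\omega) \cap [0, t)}(1 - \alpha_\tau)$ to complete the sum, whereas you first factor out the common tail product over $[\tilde{t}, t)$ and telescope the remaining head sum to $1 - \prod_{\tau \in \mathcal{T}_{(s,a)}(\omega) \cap [0, \tilde{t})}(1 - \alpha_\tau) \leq 1$.
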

  \begin{proof}
    Since $1 - \alpha \leq e^{-\alpha}$ for all $\alpha \in \R$,
    \begin{align*}
      \sum_{t' \in \mathcal{T}_{(s, a)}(\omega) \cap [0, \tilde{t})} \hat{P}((S'_{t'}(\omega), t') | (s, t), a)
        & =
            \sum_{t' \in \mathcal{T}_{(s, a)}(\omega) \cap [0, \tilde{t})}
              \alpha_{t'}\prod_{\tau \in \mathcal{T}_{(s, a)}(\omega) \cap (t', t)}(1 - \alpha_\tau) \\
        & \leq
            \prod_{\tau \in \mathcal{T}_{(s, a)}(\omega) \cap [0, t)}(1 - \alpha_\tau) +
            \sum_{t' \in \mathcal{T}_{(s, a)}(\omega) \cap [0, \tilde{t})}
              \alpha_{t'}\prod_{\tau \in \mathcal{T}_{(s, a)}(\omega) \cap (t', t)}(1 - \alpha_\tau) \\
        & = \prod_{\tau \in \mathcal{T}_{(s, a)}(\omega) \cap [\tilde{t}, t)}(1 - \alpha_\tau) \\
        & \leq \prod_{\tau \in \mathcal{T}_{(s, a)}(\omega) \cap [\tilde{t}, t)}e^{-\alpha_\tau} \\
        & = e^{-\sum_{\tau \in \mathcal{T}_{(s, a)}(\omega) \cap [\tilde{t}, t)}\alpha_\tau}
    \end{align*}
    where the second equality follows by induction on $\tilde{t}$ (we encourage the reader to check).
  \end{proof}

  \begin{lemma}
    \label{lemma:one_step_error}

    Let $M = \langle\mathcal{S}, \mathcal{A}, P, r, \gamma\rangle$ be a finite MDP, let $\alpha = (\alpha_t)_{t \in \N_0}$ be a stepsize sequence, let $\omega \in \Omega_M$, let $(Q_t := Q_t^\alpha(\omega))_{t \in \N_0}$ be the induced Q-learning iterates on $M$, and let $\tilde{t}, t \in \N_0$ with $\tilde{t} \leq t$. Then, for any $(s, a) \in \mathcal{S} \times \mathcal{A}$, $\abs{Q_t(s, a) - q^*_M(s, a)}$ is at most
    \begin{align*}
      \gamma \max_{t' \in [\tilde{t}, t)} \norm{Q_{t'} - q^*_M}_\infty +
      \norm{\hat{r}_t - r}_\infty +
      \p{\frac{\gamma \norm{r}_\infty}{1 - \gamma}}\p{
        \abs{\mathcal{S}}\norm{\hat{P}_t - P}_\infty +
        2e^{-\sum_{\tau \in \mathcal{T}_{(s, a)}(\omega) \cap [\tilde{t}, t)}\alpha_\tau}
      }
    \end{align*}
    where $\hat{M} := \hat{M}^\alpha(\omega) = \langle \hat{\mathcal{S}}, \mathcal{A}, \hat{P}, \hat{r}, \gamma \rangle$, $\hat{P}_t(s' | s, a) := \sum_{t' \in \mathcal{T}_{(s, a)}(\omega) \cap [0, t)} \hat{P}((s', t') | (s, t), a)$, and $\hat{r}_t(s, a) := \hat{r}((s, t), a)$.
  \end{lemma}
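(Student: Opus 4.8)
The plan is to combine Theorem~\ref{thm:q*_arp}, which identifies $Q_t(s,a)$ with $q^*_{\hat M}((s,t),a)$, with the closed form for the optimal action-values of the action-replay process recorded in Equation~(\ref{eq:arp_q*}), and then bound $\abs{q^*_{\hat M}((s,t),a) - q^*_M(s,a)}$ directly by comparing the two Bellman fixed-point equations term by term, splitting the occurrence-weighted sum at the cutoff $\tilde t$. Write $w_{t'} := \hat P((S'_{t'}(\omega),t')|(s,t),a)$ for $t' \in \mathcal{T}_{(s,a)}(\omega)\cap[0,t)$; these weights are nonnegative and sum to at most $1$ (they are the non-$s_{\textrm{absorb}}$ entries of $\hat P(\cdot|(s,t),a)$), so that $\hat r_t(s,a) = r(s,a)\sum_{t'}w_{t'}$ and $\hat P_t(s'|s,a) = \sum_{t' : S'_{t'}(\omega)=s'}w_{t'}$. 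Using Equation~(\ref{eq:arp_q*}) together with Theorem~\ref{thm:q*_arp} once more to rewrite $\max_{a'}q^*_{\hat M}((S'_{t'}(\omega),t'),a') = \max_{a'}Q_{t'}(S'_{t'}(\omega),a')$, I would obtain
\begin{align*}
  q^*_{\hat M}((s,t),a) - q^*_M(s,a) &= \bigl(\hat r_t(s,a) - r(s,a)\bigr) \\
    &\quad + \gamma\Bigl(\sum_{t'}w_{t'}\max_{a'}Q_{t'}(S'_{t'}(\omega),a') - \sum_{s'}P(s'|s,a)\max_{a'}q^*_M(s',a')\Bigr),
\end{align*}
whose first summand is bounded by $\norm{\hat r_t - r}_\infty$.

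The second summand I would estimate by splitting $\sum_{t'}$ into the ``old'' occurrences $t'\in[0,\tilde t)$ and the ``recent'' ones $t'\in[\tilde t,t)$. By Lemma~\ref{lemma:low_level_bound}, the total weight $\epsilon := \sum_{t'\in\mathcal{T}_{(s,a)}(\omega)\cap[0,\tilde t)}w_{t'}$ of the old occurrences is at most $e^{-\sum_{\tau\in\mathcal{T}_{(s,a)}(\omega)\cap[\tilde t,t)}\alpha_\tau}$, and by Lemma~\ref{lemma:q*_bound} applied both to $M$ and to $\hat M$ (whose rewards are bounded by $\norm{r}_\infty$) every $\max_{a'}$-value appearing is at most $\norm{r}_\infty/(1-\gamma)$ in absolute value; hence the old occurrences contribute at most $\epsilon\norm{r}_\infty/(1-\gamma)$. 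For the recent occurrences I would first replace each $\max_{a'}Q_{t'}(S'_{t'}(\omega),a')$ by $\max_{a'}q^*_M(S'_{t'}(\omega),a')$, at total cost $\max_{t'\in[\tilde t,t)}\norm{Q_{t'}-q^*_M}_\infty$ (the weights summing to at most $1$, and $\abs{\max f - \max g}\le\norm{f-g}_\infty$), and then regroup the recent weights by successor state: $\sum_{t'\in\mathcal{T}_{(s,a)}(\omega)\cap[\tilde t,t)}w_{t'}\max_{a'}q^*_M(S'_{t'}(\omega),a') = \sum_{s'}\bigl(\hat P_t(s'|s,a) - \delta_{s'}\bigr)\max_{a'}q^*_M(s',a')$, where $\delta_{s'}\ge0$ is the weight of old occurrences whose successor is $s'$ and $\sum_{s'}\delta_{s'}\le\epsilon$. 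The $\delta_{s'}$-part costs a further $\epsilon\norm{r}_\infty/(1-\gamma)$, while $\sum_{s'}\bigl(\hat P_t(s'|s,a)-P(s'|s,a)\bigr)\max_{a'}q^*_M(s',a')$ is bounded by $\abs{\mathcal{S}}\norm{\hat P_t - P}_\infty\norm{r}_\infty/(1-\gamma)$. Summing the four contributions, multiplying by $\gamma$, and bounding $\epsilon$ by the exponential yields precisely the asserted inequality.

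The only genuine subtlety is the bookkeeping at the cutoff: the occurrences before $\tilde t$ enter the estimate twice — once as leftover mass multiplying the only crudely controlled values $\max_{a'}Q_{t'}$, and again when the recent weights are reassembled into the sub-probability vector $\hat P_t(\cdot|s,a)$, which they undershoot by exactly the old mass $\delta_{s'}$ — and this is what produces the factor $2$ in front of $e^{-\sum_{\tau\in\mathcal{T}_{(s,a)}(\omega)\cap[\tilde t,t)}\alpha_\tau}$. Beyond keeping track of this, the argument is a routine chain of triangle-inequality and $\ell^1$--$\ell^\infty$ estimates.
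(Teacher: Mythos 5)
Your proposal is correct and follows essentially the same route as the paper: identify $Q_t(s,a)$ with $q^*_{\hat M}((s,t),a)$ via Theorem~\ref{thm:q*_arp}, compare the two Bellman fixed-point identities, split the occurrence sum at $\tilde t$, control the old mass with Lemma~\ref{lemma:low_level_bound}, and bound the value terms with Lemma~\ref{lemma:q*_bound}. The only (immaterial) difference is bookkeeping: the paper gets the factor $2$ by charging $\norm{q^*_{\hat M}}_\infty + \norm{q^*_M}_\infty$ against the old occurrences while keeping the full $\hat P_t$ in the dynamics-error term, whereas you charge $\norm{q^*_{\hat M}}_\infty$ once on the old occurrences and pick up the second $\norm{q^*_M}_\infty$ from the deficit $\delta_{s'}$ when reassembling $\hat P_t$ from the recent occurrences alone.
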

  \begin{proof}
    Fix $(s, a) \in \mathcal{S} \times \mathcal{A}$. By Theorem \ref{thm:q*_arp} and the triangle inequality,
    \begin{align*}
      |Q_t(s, a) - q^*_M(s, a)|
        = & \abs{T^*_{\hat{M}}q^*_{\hat{M}}((s, t), a) - T^*_Mq^*_M(s, a)} \\
        \leq
          & \abs{\hat{r}((s, t), a) - r(s, a)} + \\
          & \gamma\Bigg|
              \sum_{t' \in \mathcal{T}_{(s, a)}(\omega) \cap [0, t)}
                \hat{P}((S'_{t'}(\omega), t') | (s, t), a)
                \max_{a' \in \mathcal{A}} q^*_{\hat{M}}((S'_{t'}(\omega), t'), a') \\
          & \qquad\qquad -
              \sum_{s' \in \mathcal{S}}
                P(s' | s, a)
                \max_{a' \in \mathcal{A}} q^*_M(s', a')\Bigg|.
    \end{align*}
    But $\abs{\hat{r}((s, t), a) - r(s, a)} = \abs{\hat{r}_t(s, a) - r(s, a)} \leq \norm{\hat{r}_t - r}_\infty$ and, applying the triangle inequality once more,
    \begin{align*}
      \abs{
        \sum_{t' \in \mathcal{T}_{(s, a)}(\omega) \cap [0, t)}
          \hat{P}((S'_{t'}(\omega), t') | (s, t), a)
          \max_{a' \in \mathcal{A}} q^*_{\hat{M}}((S'_{t'}(\omega), t'), a')
        -
        \sum_{s' \in \mathcal{S}}
          P(s' | s, a)
          \max_{a' \in \mathcal{A}} q^*_M(s', a')
      }
    \end{align*}
    is bounded by the sum of (\ref{exp:term_a}) and (\ref{exp:term_b}) where
    \begin{align}
      \label{exp:term_a}
      \Bigg|
        \sum_{t' \in \mathcal{T}_{(s, a)}(\omega) \cap [0, t)}&
          \hat{P}((S'_{t'}(\omega), t') | (s, t), a)\p{
            \max_{a' \in \mathcal{A}} q^*_{\hat{M}}((S'_{t'}(\omega), t'), a') -
            \max_{a' \in \mathcal{A}} q^*_M(S'_{t'}(\omega), a')
          }
      \Bigg| \\
      \leq
      &
      \sum_{t' \in \mathcal{T}_{(s, a)}(\omega) \cap [\tilde{t}, t)}
        \hat{P}((S'_{t'}(\omega), t') | (s, t), a)
        \max_{a' \in \mathcal{A}}
          \abs{
            q^*_{\hat{M}}((S'_{t'}(\omega), t'), a') -
            q^*_M(S'_{t'}(\omega), a')
          } + \nonumber\\
      &
      \sum_{t' \in \mathcal{T}_{(s, a)}(\omega) \cap [0, \tilde{t})}
        \hat{P}((S'_{t'}(\omega), t') | (s, t), a)\p{
          \norm{q^*_{\hat{M}}}_\infty +
          \norm{q^*_M}_\infty
        } \nonumber\\
      \leq
      &
      \sum_{t' \in \mathcal{T}_{(s, a)}(\omega) \cap [\tilde{t}, t)}
        \hat{P}((S'_{t'}(\omega), t') | (s, t), a)
        \max_{a' \in \mathcal{A}}
          \abs{
            Q_{t'}(S'_{t'}(\omega), a') -
            q^*_M(S'_{t'}(\omega), a')
          } +
      \tag{Theorem \ref{thm:q*_arp}}\\
      &
      \p{\frac{\norm{\hat{r}}_\infty + \norm{r}_\infty}{1 - \gamma}}
      \sum_{t' \in \mathcal{T}_{(s, a)}(\omega) \cap [0, \tilde{t})}
        \hat{P}((S'_{t'}(\omega), t') | (s, t), a)
      \tag{Lemma \ref{lemma:q*_bound}}\\
      \leq
      &
      \max_{t' \in [\tilde{t}, t)}
        \norm{Q_{t'} - q^*_M}_\infty +
      \p{\frac{2\norm{r}_\infty}{1 - \gamma}}
      e^{-\sum_{\tau \in \mathcal{T}_{(s, a)}(\omega) \cap [\tilde{t}, t)}\alpha_\tau}
      \tag{Lemma \ref{lemma:low_level_bound}}
    \end{align}
    and
    \begin{align}
      \label{exp:term_b}
      \Bigg|
        \sum_{t' \in \mathcal{T}_{(s, a)}(\omega) \cap [0, t)}&
          \hat{P}((S'_{t'}(\omega), t') | (s, t), a)
          \max_{a' \in \mathcal{A}} q^*_M(S'_{t'}(\omega), a')
        -
        \sum_{s' \in \mathcal{S}}
          P(s' | s, a)
          \max_{a' \in \mathcal{A}} q^*_M(s', a')
      \Bigg| \\
      =
      &
      \abs{
        \sum_{s' \in \mathcal{S}}
          \p{
            \sum_{t' \in \mathcal{T}_{(s, a)}(\omega) \cap [0, t)}
              \hat{P}((s', t') | (s, t), a)
            -
            P(s' | s, a)
          }
          \max_{a' \in \mathcal{A}} q^*_M(s', a')
      } \nonumber\\
      \leq
      &
      \norm{q^*_M}_\infty
      \sum_{s' \in \mathcal{S}}
        \abs{\hat{P}_t(s' | s, a) - P(s' | s, a)}
      \nonumber\\
      \leq
      &
      \p{\frac{\abs{\mathcal{S}}\norm{r}_\infty}{1 - \gamma}}
      \norm{\hat{P}_t - P}_\infty
      \tag{Lemma \ref{lemma:q*_bound}}
    \end{align}
    (where the equality follows from the fact that $s' \neq S'_{t'}(\omega)$ implies $\hat{P}((s', t') | (s, t), a) = 0$), which yields the desired bound.
  \end{proof}

  It is time to finish the job. While most of the error terms provided by Lemma \ref{lemma:one_step_error} can be controlled in a straightforward manner via Theorem \ref{thm:arp_limit}, it is not immediately clear how to control $\max_{t' \in [\tilde{t}, t)} \norm{Q_{t'} - q^*_M}_\infty$. However, we will see that it may be subdued by repeatedly applying Lemma \ref{lemma:one_step_error} until a sufficiently small exponential coefficient is obtained.

  \begin{proof}[Proof Theorem \ref{thm:big_boi}]
    Taking finite unions of null sets as needed, discard a $\Pb$-null set from $\Omega_M$ so that, for all $(s, a) \in \mathcal{S} \times \mathcal{A}$ and $\omega \in \Omega_M$, $\sum_{t \in \mathcal{T}_{(s, a)}(\omega)} \alpha_t = \infty$ holds in addition to the conclusion of Theorem \ref{thm:arp_limit}. With this in mind, fix $\omega \in \Omega_M$, put $(Q_t)_{t \in \N_0} := (Q_t^\alpha(\omega))_{t \in \N_0}$, and let $(\hat{P}_t)_{t \in \N_0}$ as well as $(\hat{r}_t)_{t \in \N_0}$ be as in Lemma \ref{lemma:one_step_error}.

    Now, let $\epsilon > 0$ and choose $k \in \N$ sufficiently large so that
    \begin{align*}
      \gamma^{k + 1} \leq \frac{\epsilon(1 - \gamma)}{8 \norm{r}_\infty}
    \end{align*}
    (where $\cdot/0 := \infty$). Furthermore, by Theorem \ref{thm:arp_limit}, we may find $t_0 \in \N_0$ such that
    \begin{align*}
      \norm{\hat{r}_t - r}_\infty
        \leq \frac{\epsilon(1 - \gamma)}{4}
    \end{align*}
    and
    \begin{align*}
      \norm{\hat{P}_t - P}_\infty
        \leq \frac{\epsilon(1 - \gamma)^2}{4\gamma\abs{\mathcal{S}}\norm{r}_\infty}
    \end{align*}
    for $t \geq t_0$. Finally, as $\mathcal{S} \times \mathcal{A}$ is finite and as $\sum_{t \in \mathcal{T}_{(s, a)}(\omega)} \alpha_t = \infty$ for all $(s, a) \in \mathcal{S} \times \mathcal{A}$, we may choose $t_k \geq \dots \geq t_1 \geq t_0$ sufficiently far apart such that
    \begin{align*}
      e^{-\sum_{\tau \in \mathcal{T}_{(s, a)}(\omega) \cap [t_{i - 1}, t_i)} \alpha_\tau}
        \leq \frac{\epsilon(1 - \gamma)^2}{8\gamma \norm{r}_\infty}
    \end{align*}
    for all $i \in \{1, \dots, k\}$ and $(s, a) \in \mathcal{S} \times \mathcal{A}$.

    In particular, for any $i \in \{1, \dots, k\}$, $t \geq t_i$, and $(s, a) \in \mathcal{S} \times \mathcal{A}$,
    \begin{align*}
      \norm{\hat{r}_t - r}_\infty +
      &
      \p{\frac{\gamma\norm{r}_\infty}{1 - \gamma}}
      \p{
        \abs{\mathcal{S}}\norm{\hat{P}_{t} - P}_\infty +
        2e^{-\sum_{\tau \in \mathcal{T}_{(s, a)}(\omega) \cap [t_{i - 1}, t)} \alpha_\tau}
      } \\
      & \leq
        \norm{\hat{r}_t - r}_\infty +
        \p{\frac{\gamma\norm{r}_\infty}{1 - \gamma}}
        \p{
          \abs{\mathcal{S}}\norm{\hat{P}_{t} - P}_\infty +
          2e^{-\sum_{\tau \in \mathcal{T}_{(s, a)}(\omega) \cap [t_{i - 1}, t_i)} \alpha_\tau}
        } \\
      & \leq
        \frac{3}{4}\epsilon(1 - \gamma)
    \end{align*}
    since $(\alpha_t)_{t \in \N_0}$ is non-negative, so it follows by inductive application of Lemma \ref{lemma:one_step_error} that
    \begin{align*}
      \norm{Q_t - q^*_M}_\infty
        & \leq
            \gamma\max_{t' \in [t_k, t)}\norm{Q_{t'} - q^*_M}_\infty +
            \frac{3}{4}\epsilon(1 - \gamma) \\
        & \leq
            \gamma
            \max_{t' \in [t_k, t)}\p{
              \gamma\max_{t'' \in [t_{k - 1}, t')}\norm{Q_{t''} - q^*_M}_\infty +
              \frac{3}{4}\epsilon(1 - \gamma)
            } +
            \frac{3}{4}\epsilon(1 - \gamma) \\
        & =
            \gamma^2\max_{t' \in [t_{k - 1}, t)}\norm{Q_{t'} - q^*_M}_\infty +
            \frac{3}{4}\epsilon(1 - \gamma)(1 + \gamma) \\
        & \leq
            \dots \\
        & =
            \gamma^{k + 1}\max_{t' \in [t_0, t)}\norm{Q_{t'} - q^*_M}_\infty +
            \frac{3}{4}\epsilon(1 - \gamma)(1 + \gamma + \dots + \gamma^k) \\
        & \leq 
            \p{\frac{\epsilon(1 - \gamma)}{8 \norm{r}_\infty}}
            \p{\frac{2\norm{r}_\infty}{1 - \gamma}} +
            \frac{\frac{3}{4}\epsilon(1 - \gamma)}{1 - \gamma}
            \tag{Lemma \ref{lemma:q*_bound}} \\
        & = \epsilon
    \end{align*}
    for all $t \geq t_k$ and, with that, the beast has been slain.
  \end{proof}

  \clearpage

  \bibliographystyle{plainnat}
  \bibliography{qlc}

  \clearpage

  \appendix

  \section{Notation}

  \label{sec:notation}

  \subsection{Measure Theory}

  \begin{definition}
    A $\sigma$-algebra on a non-empty set $\mathcal{X}$ is collection of subsets $\mathcal{F}$ of $\mathcal{X}$ satisfying
    \begin{enumerate}[(i)]
      \item
        $\varnothing \in \mathcal{F}$;
      \item
        $\forall A \in \mathcal{F},\ \mathcal{X} \setminus A \in \mathcal{F}$; and
      \item
        $\forall A_1, A_2, \dots \in \mathcal{F},\ \bigcup_{n \in \N} A_n \in \mathcal{F}$.
    \end{enumerate}
    In this case, we call $(\mathcal{X}, \mathcal{F})$ a measurable space.
  \end{definition}

  \begin{definition}
    Given measurable spaces $(\mathcal{X}, \mathcal{F})$ and $(\mathcal{Y}, \mathcal{G})$ as well as a function $A : \mathcal{X} \to \mathcal{Y}$, the $\sigma$-algebra
    induced by $A$ is
    \begin{align*}
      \sigma_{\mathcal{G}}(A) := \{A^{-1}(G) : G \in \mathcal{G}\}
    \end{align*}
    and is usually denoted $\sigma(A)$ (when $\mathcal{G}$ is clear from context). Moreover,
    if $\sigma_{\mathcal{G}}(A) \subseteq \mathcal{F}$, we say that $A$ is $\mathcal{F}/\mathcal{G}$-measurable, $\mathcal{F}$-measurable, or just measurable for short.
  \end{definition}

  \begin{remark}
    Every non-empty set $\mathcal{X}$ admits at least one $\sigma$-algebra---namely $\mathcal{P}(\mathcal{X})$---and if $\{\mathcal{F}_i : i \in \mathcal{I}\}$ is a non-empty family of $\sigma$-algebras on $\mathcal{X}$, then $\bigcap_{i \in \mathcal{I}} \mathcal{F}_i$ is a $\sigma$-algebra
    on $\mathcal{X}$.
  \end{remark}

  \begin{definition}
    Given measurable spaces $(\mathcal{X}_i, \mathcal{F}_i)_{i \in \mathcal{I}}$, the product $\sigma$-algebra on $\bigtimes_{i \in \mathcal{I}} \mathcal{X}_i$
    \begin{align*}
      \bigotimes_{i \in \mathcal{I}} \mathcal{F}_i
        := \bigcap \{\mathcal{F} \text{ a $\sigma$-algebra on $\bigtimes_{i \in \mathcal{I}} \mathcal{X}_i$} : \forall i \in \mathcal{I},\ \text{$\pi_i$ is $\mathcal{F}/\mathcal{F}_i$-measurable} \}
    \end{align*}
    is the smallest $\sigma$-algebra with respect to which each projection $\pi_i : \bigtimes\limits_{j \in \mathcal{I}}\mathcal{X}_j \to \mathcal{X}_i$ is measurable.
  \end{definition}

  \begin{definition}
    A probability measure on a measurable space $(\mathcal{X}, \mathcal{F})$ is $\Pb : \mathcal{F} \to [0, \infty]$ s.t.
    \begin{enumerate}[(i)]
      \item
        $\Pb(\mathcal{X}) = 1$; and
      \item
        $\forall A_1, A_2, \dots \in \mathcal{F},\ (A_n)_{n \in \N} \text{ pairwise disjoint} \implies \Pb\p{\bigcup_{n \in \N} A_n} = \sum_{n \in \N} \Pb(A_n)$.
    \end{enumerate}
    Altogether, we call $(\mathcal{X}, \mathcal{F}, \Pb)$ a probability space and real-valued measurable functions on $\mathcal{X}$ are called random variables.
  \end{definition}

  \begin{definition}
    The set of probability measures on a measurable space $(\mathcal{X}, \mathcal{F})$ is denoted $\Delta(\mathcal{X}, \mathcal{F})$. If $\mathcal{X}$ is countable, we write $\Delta(\mathcal{X}) := \Delta(\mathcal{X}, \mathcal{P}(\mathcal{X}))$ for short.
  \end{definition}

  \subsection{Function Spaces}

  \begin{definition}
    Let $\mathcal{I}$ be a non-empty set. The supremum norm on $\mathcal{I} \to \R$ ($\R^{\mathcal{I}}$ for short) is
    \begin{align*}
      \norm{\cdot}_{\mathcal{I},\infty} : \R^\mathcal{I} \to [0, \infty],
      x \mapsto \sup_{i \in \mathcal{I}} \abs{x(i)}
    \end{align*}
    and the set of bounded real-valued functions on $\mathcal{I}$ is
    \begin{align*}
      \ell^\infty(\mathcal{I}) := \{x \in \R^\mathcal{I} : \norm{x}_{\mathcal{I},\infty} < \infty\}.
    \end{align*}
    Frequently, $\mathcal{I}$ is clear from context, in which case we write $\norm{\cdot}_\infty$ instead of $\norm{\cdot}_{\mathcal{I}, \infty}$.
  \end{definition}

  \section{Banach's Fixed Point Theorem}

  In order to prove Theorem \ref{thm:existence_of_q*}, we first need to know a little bit about metric spaces.

  \begin{definition}
    Let $E$ be non-empty. We call $d : E \times E \to [0, \infty)$ a metric on $E$ when
    \begin{enumerate}[(i)]
      \item
        $\forall x, y \in E,\ d(x, y) = 0 \iff x = y$;
      \item
        $\forall x, y \in E,\ d(x, y) = d(y, x)$; and
      \item
        $\forall x, y, z \in E,\ d(x, z) \leq d(x, y) + d(y, z)$.
    \end{enumerate}
    In this case, the pair $(E, d)$ is called a metric space.
  \end{definition}
  \begin{definition}
    A metric space $(E, d)$ is said to be complete when, for all sequences $(x_n)_{n \in \N_0}$ in $E$ satisfying
    \begin{align*}
      \forall \epsilon > 0,\ \exists n_0 \in \N_0,\ \forall n_1, n_2 \geq n_0,\ 
        d(x_{n_1}, x_{n_2}) \leq \epsilon
    \end{align*}
    (i.e. $(x_n)_{n \in \N_0}$ is a Cauchy sequence), we have that $d(x_n, x_\infty) \to 0$ for some $x_\infty \in E$.
  \end{definition}

  \begin{definition}
    Let $(E, d)$ be a metric space and let $\gamma \in [0, 1)$. We say that a map $T : E \to E$ is a $\gamma$-contraction on $(E, d)$ when
    \begin{align*}
      d(T(x), T(y)) \leq \gamma d(x, y)
    \end{align*}
    holds for all $x, y \in E$.
  \end{definition}

  \begin{theorem}[Banach's Fixed Point Theorem] \label{thm:banach}
    Let $(E, d)$ be a complete metric space and let $T : E \to E$ be a $\gamma$-contraction for some $\gamma \in [0, 1)$. Then $T$ admits a unique fixed point.
  \end{theorem}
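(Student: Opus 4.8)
The plan is to run the classical Picard iteration. First I would fix an arbitrary $x_0 \in E$ and define the sequence $(x_n)_{n \in \N_0}$ recursively by $x_{n+1} := T(x_n)$. The $\gamma$-contraction hypothesis immediately gives $d(x_{n+1}, x_n) \leq \gamma\, d(x_n, x_{n-1})$ for $n \geq 1$, hence $d(x_{n+1}, x_n) \leq \gamma^n d(x_1, x_0)$ by induction. Then, for any $m > n$, the triangle inequality together with a geometric series bound yields
\begin{align*}
  d(x_m, x_n)
    \leq \sum_{k = n}^{m - 1} d(x_{k+1}, x_k)
    \leq d(x_1, x_0)\sum_{k = n}^{m - 1}\gamma^k
    \leq \frac{\gamma^n}{1 - \gamma}\, d(x_1, x_0)
    \xrightarrow{n \to \infty} 0,
\end{align*}
so $(x_n)_{n \in \N_0}$ is a Cauchy sequence.

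By completeness of $(E, d)$ there is some $x_\infty \in E$ with $d(x_n, x_\infty) \to 0$. I would then verify that $x_\infty$ is a fixed point: since $T$ is a $\gamma$-contraction it is Lipschitz and hence continuous, so $d(T(x_n), T(x_\infty)) \leq \gamma\, d(x_n, x_\infty) \to 0$; but $T(x_n) = x_{n+1} \to x_\infty$ as well, and limits in a metric space are unique, so $T(x_\infty) = x_\infty$. (Equivalently, one bounds $d(T(x_\infty), x_\infty) \leq \gamma\, d(x_\infty, x_n) + d(x_{n+1}, x_\infty)$ and lets $n \to \infty$.)

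For uniqueness, suppose $x^*$ and $y^*$ are both fixed points of $T$. Then $d(x^*, y^*) = d(T(x^*), T(y^*)) \leq \gamma\, d(x^*, y^*)$, i.e. $(1 - \gamma)\, d(x^*, y^*) \leq 0$; since $\gamma < 1$ this forces $d(x^*, y^*) = 0$, so $x^* = y^*$. I do not expect a serious obstacle here: the only step that requires any care is the geometric-series estimate that makes the Picard sequence Cauchy, and everything afterwards follows immediately from continuity of $T$ and the definition of completeness.
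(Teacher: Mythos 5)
Your proof is correct and is precisely the standard Picard-iteration argument that the paper itself recommends (the paper omits the proof but its hint --- fix $x_0$, show $(T^n(x_0))_{n \in \N_0}$ is Cauchy via the convergence of $\sum_{n=0}^\infty \gamma^n$ --- is exactly your strategy). Nothing to add.
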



  The proof of Banach's fixed point theorem is a classic exercise in analysis. We omit it here but encourage the reader to try it on their own (hint: fix an arbitrary $x_0 \in E$ and show that $(T^n(x_0))_{n \in \N_0}$ is Cauchy by leveraging the fact that $\sum_{n = 0}^\infty \gamma^n$ is a convergent series).

  \section{Proofs of Results in Subsection \ref{subsec:mdps}}

  \label{subsec:mdps_proofs}

  In any case, the latter fixed point theorem is all we need to show that optimal action-value functions exist and are unique in an MDP.

  \begin{proof}[Proof of Theorem \ref{thm:existence_of_q*}]
    Let $M = \langle\mathcal{S}, \mathcal{A}, P, r, \gamma\rangle$.

    It is a straightforward exercise to verify that
    \begin{align*}
      d_\infty : \R^{\mathcal{S} \times \mathcal{A}} \times \R^{\mathcal{S} \times \mathcal{A}} \to [0, \infty],
      (q_1, q_1) \mapsto \norm{q_1 - q_2}_\infty
    \end{align*}
    is a metric on $\ell^\infty(\mathcal{S} \times \mathcal{A})$ and we omit the details.

    As for completeness, let $(q_n)_{n \in \N_0}$ be a Cauchy sequence in $(\ell^\infty(\mathcal{S} \times \mathcal{A}), d_\infty)$ and let $\epsilon > 0$. For each $(s, a) \in \mathcal{S} \times \mathcal{A}$ and $n_1, n_2 \in \N_0$, $\abs{q_{n_1}(s, a) - q_{n_2}(s, a)} \leq d_\infty(q_{n_1}, q_{n_2})$, which implies that $(q_n(s, a))_{n \in \N_0}$ is a Cauchy sequence in $\R$ and hence, by completeness of $\R$, converges to some $q_\infty(s, a) \in \R$; in particular, there is $n_{(s, a)} \in \N_0$ such that $\abs{q_n(s, a) - q_\infty(s, a)} \leq \frac{\epsilon}{2}$ for $n \geq n_{(s, a)}$. Furthermore, there is $n_0 \in \N_0$ for which $d_\infty(q_{n_1}, q_{n_2}) \leq \frac{\epsilon}{2}$ for $n_1, n_2 \geq n_0$. Hence
    \begin{align*}
      d_\infty(q_n, q_\infty)
        & = \sup_{(s, a) \in \mathcal{S} \times \mathcal{A}} \abs{q_n(s, a) - q_\infty(s, a)} \\
        & \leq
            \sup_{(s, a) \in \mathcal{S} \times \mathcal{A}} \p{
              d_\infty(q_n, q_{\max\{n_0, n_{(s, a)}\}}) +
              \abs{q_{\max\{n_0, n_{(s, a)}\}}(s, a) - q_\infty(s, a)}
            } \\
        & \leq
            \sup_{(s, a) \in \mathcal{S} \times \mathcal{A}} \p{
              \frac{\epsilon}{2} + \frac{\epsilon}{2}
            } \\
        & \leq \epsilon
    \end{align*}
    for $n \geq n_0$ and so $d_\infty(q_n, q_\infty) \to 0$. In particular, $d_\infty(q_{n_0}, q_\infty) < 1$ for some $n_0 \in \N_0$ and thus
    \begin{align*}
      \norm{q_\infty}_\infty
        \leq \norm{q_{n_0}}_\infty + d_\infty(q_{n_0}, q_\infty) < \infty,
    \end{align*}
    i.e. $q_\infty \in \ell^\infty(\mathcal{S} \times \mathcal{A})$ so that the latter is complete with respect to $d_\infty$ as claimed.

    Finally, we claim that $T^*_M$ is a $\gamma$-contraction on $(\ell^\infty(\mathcal{S} \times \mathcal{A}), d_\infty)$ as the conclusion will then follow immediately from Theorem \ref{thm:banach}. Indeed, for any $q_1, q_2 \in \ell^\infty(\mathcal{S} \times \mathcal{A})$ and $(s, a) \in \mathcal{S} \times \mathcal{A}$,
    \begin{align*}
      \abs{T^*_Mq_1(s, a) - T^*_Mq_2(s, a)}
        & = \abs{
              \gamma
              \sum_{s' \in \mathcal{S}}
                P(s' | s, a)
                \p{\max_{a' \in \mathcal{A}} q_1(s', a') - \max_{a' \in \mathcal{A}} q_2(s', a')}
            } \\
        & \leq \gamma
            \sum_{s' \in \mathcal{S}}
              P(s' | s, a)
              \abs{\max_{a' \in \mathcal{A}} q_1(s', a') - \max_{a' \in \mathcal{A}} q_2(s', a')} \\
        & \leq \gamma
            \sum_{s' \in \mathcal{S}}
              P(s' | s, a)
              \max_{a' \in \mathcal{A}} \abs{q_1(s', a') - q_2(s', a')} \\
        & \leq \gamma
            \sum_{s' \in \mathcal{S}}
              P(s' | s, a)
              d_\infty(q_1, q_2) \\
        & = \gamma d_\infty(q_1, q_2),
    \end{align*}
    which implies that $d_\infty(T^*_M q_1, T^*_M q_2) \leq \gamma d_\infty(q_1, q_2)$ as desired.
  \end{proof}

  Lastly, the proof of Lemma \ref{lemma:q*_bound} follows from a straightforward calculation.

  \begin{proof}[Proof of Lemma \ref{lemma:q*_bound}]
    Let $M = \langle\mathcal{S}, \mathcal{A}, P, r, \gamma\rangle$. Then, for any $(s, a) \in \mathcal{S} \times \mathcal{A}$,
    \begin{align*}
      \abs{q^*_M(s, a)}
        & = \abs{T^*_Mq^*_M(s, a)} \\
        & = \abs{r(s, a) + \gamma\sum_{s' \in \mathcal{S}} P(s' | s, a)\sup_{a' \in \mathcal{A}} q^*_M(s', a')} \\
        & \leq \abs{r(s, a)} + \gamma\sum_{s' \in \mathcal{S}} P(s' | s, a)\abs{\sup_{a' \in \mathcal{A}} q^*_M(s', a')} \\
        & \leq \norm{r}_\infty + \gamma\norm{q^*_M}_\infty\sum_{s' \in \mathcal{S}} P(s' | s, a) \\
        & = \norm{r}_\infty + \gamma\norm{q^*_M}_\infty.
    \end{align*}
    In particular, $\norm{q^*_M}_\infty \leq \norm{r}_\infty + \gamma\norm{q^*_M}_\infty$ and hence $\norm{q^*_M}_\infty \leq \frac{\norm{r}_\infty}{1 - \gamma}$.
  \end{proof}






\end{document}